\newtheorem{lemma}{Lemma}
\newtheorem{theorem}{Theorem}
\newtheorem{definition}{Definition}
\newcommand{\Ttau}{{T}}
\newcommand{\saferef}[2]{%
    \@ifundefined{r@#1}{#2}{\ref{#1}}%
}
\title{SafeMIL: Learning Offline Safe Imitation Policy from Non-Preferred Trajectories}
\author{
   Returaj Burnwal\textsuperscript{\rm 1},
   Nirav Pravinbhai Bhatt\textsuperscript{\rm 2},
   Balaraman Ravindran\textsuperscript{\rm 2}
}
\begin{document}

\maketitle

\begin{abstract}
In this work, we study the problem of offline safe imitation learning (IL). In many real-world settings, online interactions can be risky, and accurately specifying the reward and the safety cost information at each timestep can be difficult. However, it is often feasible to collect trajectories reflecting undesirable or risky behavior, implicitly conveying the behavior the agent should avoid. We refer to these trajectories as non-preferred trajectories. Unlike standard IL, which aims to mimic demonstrations, our agent must also learn to avoid risky behavior using non-preferred trajectories. In this paper, we propose a novel approach, SafeMIL, to learn a parameterized cost that predicts if the state-action pair is risky via \textit{Multiple Instance Learning}. The learned cost is then used to avoid non-preferred behaviors, resulting in a policy that prioritizes safety. We empirically demonstrate that our approach can learn a safer policy that satisfies cost constraints without degrading the reward performance, thereby outperforming several baselines.
\end{abstract}

\section{Introduction}
Reinforcement Learning (RL) has demonstrated significant success across various challenging tasks \cite{roboticsRL,humanDeepRL,masteringGo,autoDriveRL}. Nevertheless, its deployment in real-world applications is limited because most RL algorithms require many online interactions with the environment to learn a good policy. This requirement hinders its use in many real-world domains, especially where such interactions are risky, such as robotics or autonomous driving. Another challenge lies in selecting a suitable reward function for complex real-world tasks. RL agent learns a policy by maximizing the reward; however, crafting a reward function that encodes the desired behavior can be challenging in various practical applications. An ill-defined reward function can lead to unintended and potentially harmful behaviors \cite{ill_defined_reward_aaai_2014,ill_defined_reward_iclr_2020}. In contrast to RL, which relies on environmental rewards, Imitation Learning (IL) \cite{il_1_neurips_1996} has the distinct advantage of learning solely from expert demonstrations, eliminating the need to design a reward function. 

In most real-world domains, agents should also adhere to safety constraints alongside reward maximization. Safety in RL is usually modeled as a Constrained Markov Decision Process (CMDP) \cite{cmdp_1999,safety_gym_2019,safety_gymnasium_neurips_2023}, where a policy is considered safe if it satisfies all the safety constraints. Offline Safe RL \cite{copo_icml_2022,cdt_icml_2023} methods can learn safe policies from pre-collected datasets. However, these methods require access to constraint cost information at each timestep, which can be challenging to obtain. For example, assessing the toxicity in conversational agents or defining safety constraints in complex domains like surgical robotics can be difficult. In contrast, labeling a trajectory non-preferred is easier than defining constraint costs at each timestep. Furthermore, non-preferred trajectories that violate safety constraints are often naturally collected. For instance, vehicle accidents and toxic content reported by chatbot users can be used as non-preferred trajectories. 

In this paper, we address the problem of offline safe imitation learning, where the agent learns a safe policy by avoiding non-preferred behaviors. Our approach relies on a small number of non-preferred trajectories, along with a large collection of unlabeled trajectories that contain a mix of preferred (i.e., high return, safety-constraint satisfying) and non-preferred (i.e., high return but safety-constraint violating) trajectories, where both per-timestep reward and cost information are unknown.
For example,  consider a real-world scenario of autonomous driving where we have access to limited non-preferred trajectories (e.g., red-light running, proximity to other vehicles) and a large unlabeled dataset of human-driven trajectories, which may include non-preferred trajectories. Given no further online data collection, the challenge is to learn a safe policy from these offline datasets. 

We propose a novel \textit{offline Safe imitation learning via Multiple Instance Learning}, SafeMIL, algorithm that can learn a safe policy using limited non-preferred trajectories and a large number of unlabeled trajectories where both reward and cost information are unavailable. Our key contributions are as follows: 1) We formulate the learning of a parameterized cost function that predicts if the state-action pair is risky as a Multiple Instance Learning (MIL) problem. Our work is the first to introduce the MIL formulation for offline safe IL setting. 2) Our method can learn the cost function even with a simple and intuitive score function, equation \ref{eq:bag_score}. The learned cost is then used to identify preferred behavior in the unlabeled dataset. This allows the agent to learn a policy via behavior cloning that imitates behaviors likely to satisfy CMDP constraints. 
3) We empirically demonstrate that our proposed algorithm learns significantly safer policies. It consistently outperforms or matches the state-of-the-art offline safe imitation learning methods in terms of safety, achieving final median performance that is $3.7\times$ better than the best baseline algorithm across all environments (See Appendix Table \saferef{table:performance_velocity}{3}, \saferef{table:performance_navigation}{4}).


\section{Related Works}
Offline Imitation Learning \cite{bc,demodice_iclr_2022} primarily focuses on replicating actions of the demonstrations without explicitly considering safety. It implicitly assumes that demonstrations are safe, but if they contain non-preferred trajectories, simply imitating them can lead to learning policies that are risky. We address the problem of learning a safe imitation policy by seeking limited access to non-preferred trajectories and an abundance of unlabeled trajectories that contain both preferred and non-preferred trajectories. This specific scenario has received limited attention in the literature.

\paragraph{Learning Imitation Policy from Non-Preferred Trajectories}: SafeDICE \cite{safedice_neurips_2023} addresses this challenge by directly estimating the stationary distribution of the preferred policy. The estimated distribution is then used to learn a safe policy that mimics the preferred behavior while effectively avoiding the non-preferred behaviors.

\paragraph{Learning Imitation Policy from Suboptimal Trajectories}: T-REX \cite{trex_icml_2019}, B-Pref \cite{b_pref_neurips_2021}, PEBBLE \cite{pebble_icml_2021}, OPRL \cite{offline_pref_tmlr_2023}, learn a reward function from ranked trajectories. These methods focus on learning reward functions that assign a higher total reward to higher-ranked trajectories in the dataset. T-REX, B-Pref, PEBBLE then uses online-RL \cite{ppo,sac} algorithm, and OPRL uses offline-RL \cite{cql_neurips_2020,neorl_neurips_2022} algorithm to learn the policy. Since ranking trajectories can be challenging, D-REX \cite{drex_corl_2020} and SSRR \cite{ssrr_corl_2020} generate trajectories of varying quality by injecting noise into policies learned from suboptimal demonstrations. These approaches then use the resulting trajectories to learn a reward function. However, trajectory generation requires online interaction with the environment, which is not viable in an offline setting.
Another approach, DWBC \cite{dwbc_icml_2022}, uses positive-unlabeled (PU) learning \cite{pu_learning_1_2008,pu_learning_2_2021} to train a discriminator network, which is then incorporated as a weight in the BC loss function.

\section{Preliminaries}
\subsection{Constrained Markov Decision Process}
Consider a finite Constrained Markov Decision Process (CMDP) \cite{cmdp_1999} represented by the tuple $\mathcal{M} = (\mathcal{S},\mathcal{A},\mathcal{P}, r, \rho_0, \gamma, \mathcal{C})$, where $\mathcal{S}$ and $\mathcal{A}$ represent state and action spaces, respectively. $\mathcal{P}(s_{t+1} | s_t, a_t)$ defines the probability of transitioning to state $s_{t+1}$ after executing action $a_t$ in state $s_t$ at timestep $t$. $r: \mathcal{S}\times\mathcal{A} \rightarrow \mathbb{R}$ denotes the immediate reward, $\rho_0$ is the initial state distribution, and $\gamma \in (0,1)$ is the discount factor. $\mathcal{C} = \{(c_i,b_i)\}_{i=1}^m$ is a constraint set, where $c_i: \mathcal{S}\times\mathcal{A} \rightarrow \mathbb{R}_{\geq 0}$ is the $i$-th cost function and $b_i\in\mathbb{R}_{\geq 0}$ is the corresponding threshold. A policy $\pi: \mathcal{S} \rightarrow P(\mathcal{A})$ corresponds to a map from state to a probability distribution over actions. 

The set of feasible stationary policies for a CMDP is:
\begin{align} \label{eq:constraint_set}
    \Pi_\mathcal{C} := \left\{\pi\; \big| \; \forall i,\; \mathbb{E}_{\tau \sim \pi} \left[\sum_{t=0}^{\Ttau-1} \gamma^t c_i(s_t, a_t)\right] \leq b_i \right\}
\end{align}
where $\mathbb{E}_{\tau \sim \pi} \left[\sum_{t=0}^{\Ttau-1} \gamma^t c_i(s_t, a_t)\right] \leq b_i$ is the $i$-th constraint and $\tau = (s_0, a_0,s_1,a_1 \dots, s_{\Ttau-1}, a_{\Ttau -1}, s_\Ttau)$ is a $\Ttau$ length trajectory sampled under policy $\pi$. The reinforcement learning problem in CMDP is then to find a safe optimal policy $\pi^*$ that maximizes the expected discounted cumulative reward while satisfying all constraints:
\begin{align} \label{eq:CMDP}
    \pi^\star = arg\max_{\pi\in \Pi_\mathcal{C}}\; \mathbb{E}_{\tau \sim \pi} \left[\sum_{t=0}^{\Ttau-1} \gamma^t r(s_t, a_t)\right]   
\end{align}

Imitation learning does not rely on environmental rewards but rather on demonstrations. This work focuses on an offline setting, where the policy is learned solely from the pre-collected trajectories that do not contain reward and cost information explicitly, i.e, $\tau = (s_0, a_0,s_1,a_1 \dots, s_{\Ttau-1}, a_{\Ttau -1}, s_\Ttau)$. In this work, preferred and non-preferred trajectories are defined as follows:
\begin{definition}[Preferred Trajectory]
A preferred trajectory is defined as a trajectory that achieves high returns and satisfies all the constraints. 
\end{definition}

\begin{definition}[Non-preferred Trajectory]
A non-preferred trajectory is defined as a trajectory that achieves high returns but violates constraints.
\end{definition}
We assume that  we have access to a limited number of non-preferred trajectories, $\mathcal{D}^N$, and a large number of unlabeled trajectories, $\mathcal{D}^U$, containing both preferred and non-preferred trajectories. Additionally, we represent the empirical distribution of non-preferred trajectories as $\rho^N$. This means that a trajectory  $\tau \in \mathcal{D}^N$ is a sample drawn from $\tau \sim \rho^N$. Likewise, the stationary distribution, $\rho^U$, characterizes the distribution of trajectories within $\mathcal{D}^U$.

\subsection{Multiple Instance Learning}
The goal of the standard binary supervised learning algorithm is to predict the target label, $y \in  \{0, 1\}$, for a given instance, $\text{x}$. In Multiple Instance Learning (MIL) \cite{survey_mil_elsevier_2018} problem, instead of a single instance, there is a bag of instances, $\mathcal{B} = \{\text{x}_1, \text{x}_2, \dots, \text{x}_K\}$, that are neither dependent nor ordered among each other. In general, the size of the bag, $K$, can vary; however, in our work, we chose to fix $K$. We also assume that there exists a binary label for all the instances of the bag, i.e., $y_1, \dots, y_K$ and $y_k \in \{0,1\}\; \forall\; k = 1,\dots, K$, however, we do not have access to these instance labels. Instead, we can access a single binary label, $Y$, associated with each bag. The bag label is determined by the presence or absence of at least one positive instance. A bag is negative, $Y = 0$; when all the instances of the bag are negative, i.e., $y_k = 0,\; \forall k$ and the bag is positive, $Y=1$, when there is at least one positive instance in the bag, i.e., $y_k = 1$ for some $k$. The label $Y$ is defined as:
\begin{equation}
    Y = \begin{cases}
        0,&\; y_k = 0,\; \forall\; k\; \text{in bag } \mathcal{B}\\
        1,&\;\text{otherwise}
    \end{cases}
\end{equation}
MIL has two primary classification tasks: bag-level and instance-level classification. Bag-level classification \cite{bag_emdd_neurips_2001,bag_ensemble_kis_2007} focuses on determining the label of the entire bag of instances, and instance-level classification \cite{instance_miSVM_neurips_2002,instance_amil_icml_2018} requires classifying each instance within the bag.

\section{Methodology}

This section presents \textit{offline Safe imitation learning via Multiple Instance Learning} (SafeMIL) algorithm. We learn the cost function by formulating it as a MIL problem, and then use it to identify preferred behavior from unlabeled dataset. We learn a safe policy via behavior cloning on these preferred behaviors likely to satisfy CMDP constraints.

\subsection{Formulating cost function learning as MIL problem}
To frame our cost function learning problem as a MIL problem, we need to construct bags and assign positive/negative labels. Suppose we define our bag whose instances are trajectories, $\mathcal{B} = \{\tau_1, \dots, \tau_K\}$. Then, the negative bags can be created by sampling trajectories with replacement from the non-preferred trajectory dataset, as all trajectories within the bag are non-preferred. Similarly, we construct unlabeled bags by sampling trajectories with replacement from unlabeled dataset. For an unlabeled bag to be considered as positive bag, it must contain at least one preferred trajectory. For some bag size $K$, the probability of the unlabeled bag containing at least one preferred trajectory is defined as:
\begin{lemma}
    \label{lm:large_k}
    Let $\mathcal{T}_p$ denote the set of all preferred trajectories. Let $\alpha \in (0,1)$ represent the proportion of preferred trajectories within the unlabeled dataset $\mathcal{D}^U$. Consider a bag $\mathcal{B}$ containing $K$ trajectories sampled with replacement from $\mathcal{D}^U$. Then, the probability that bag $\mathcal{B}$ contains at least one preferred trajectory is given by:
    \begin{align}
        P(\mathcal{B}\; \cap\; \mathcal{T}_p  \neq \emptyset) = 1 - (1-\alpha)^K \nonumber
    \end{align}    
\end{lemma}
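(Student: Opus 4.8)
The plan is to prove this by passing to the complementary event and exploiting the independence that sampling with replacement provides. The event $\mathcal{B} \cap \mathcal{T}_p \neq \emptyset$ (the bag contains at least one preferred trajectory) is the complement of the event that \emph{every} trajectory drawn into the bag is non-preferred. So I would first write
\begin{align}
    P(\mathcal{B} \cap \mathcal{T}_p \neq \emptyset) = 1 - P(\mathcal{B} \cap \mathcal{T}_p = \emptyset) \nonumber
\end{align}
and then focus entirely on computing the probability of the all-non-preferred event.

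First, I would set up the per-draw probabilities. Since $\alpha$ is the proportion of preferred trajectories in $\mathcal{D}^U$, a single trajectory drawn uniformly at random from $\mathcal{D}^U$ is preferred with probability $\alpha$, hence non-preferred with probability $1-\alpha$. Next, I would invoke the fact that the $K$ trajectories are sampled \emph{with replacement}: this is the crucial structural assumption, because it guarantees that the $K$ draws are independent and identically distributed, with the sampling distribution (and therefore the proportion $\alpha$) unchanged across draws. Letting $E_k$ denote the event that the $k$-th drawn trajectory is non-preferred, each $E_k$ has probability $1-\alpha$, and the $E_k$ are mutually independent.

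The third step is the factorization itself: the event $\mathcal{B} \cap \mathcal{T}_p = \emptyset$ is exactly $\bigcap_{k=1}^{K} E_k$, so by independence $P\bigl(\bigcap_{k=1}^K E_k\bigr) = \prod_{k=1}^K P(E_k) = (1-\alpha)^K$. Substituting back yields $P(\mathcal{B} \cap \mathcal{T}_p \neq \emptyset) = 1 - (1-\alpha)^K$, as claimed.

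Honestly, there is no serious obstacle here; the argument is a routine complement-and-independence computation. The only point deserving care is justifying the independence of the draws, which is precisely why the construction specifies sampling \emph{with replacement} rather than without — without replacement the draws would be negatively correlated and the clean product formula would fail. I would therefore make sure to state the with-replacement assumption explicitly as the hinge of the proof, and note in passing that the result shows the positive-bag probability grows toward $1$ geometrically in $K$, which is the practically relevant takeaway motivating the choice of a large bag size.
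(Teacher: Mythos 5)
Your proposal is correct and follows essentially the same route as the paper's proof in Appendix A: pass to the complementary all-non-preferred event, use the with-replacement sampling to get independent draws each non-preferred with probability $1-\alpha$, and multiply to obtain $(1-\alpha)^K$. The paper states this more tersely (it leaves the independence of the draws implicit), so your explicit remark that with-replacement sampling is the hinge of the product formula is a fair, if minor, elaboration rather than a different argument.
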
 
\begin{proof}
    See Appendix A. for the proof. 
\end{proof}

Therefore, for sufficiently large bag size $K$, the probability of an unlabeled bag containing at least one preferred trajectory approaches 1.

We will overload the notation $\rho^N$ to represent both trajectory ($\tau \sim \rho^N$) and negative bag ($\mathcal{B} \sim \rho^N$) sampling, and similarly, overload $\rho^U$ for trajectory ($\tau \sim \rho^U$) and unlabeled bag ($\mathcal{B} \sim \rho^U$) sampling.

As we only have access to bag-level labels, we can define a score for a bag $\mathcal{B}$ such that it can capture the bag's underlying property. Since the trajectories within a bag are unordered and independent, the score should be invariant to any permutation of these trajectories. To ensure this permutation invariance, we adopt a score function based on the Fundamental Theorem of Symmetric Functions with monomials \cite{deepsets_neurips_2017}, defined as:
\begin{equation}
    Score(\mathcal{B}) = g\left(\;\sum_{\tau \in \mathcal{B}} f(\tau)\;\right)
\end{equation}
Here, $f$ and $g$ are suitable transformation functions. This score function is invariant to the permutation of the trajectories within the bag. The effectiveness of this score function relies on the appropriate choice of these transformation functions. We opt for intuitive functions for $f$ and $g$. We define the function $f$ parameterized by $\theta$ as: 
\begin{equation}
    f(\tau) = \dfrac{1}{K} \sum_{t=0}^{\Ttau-1} \gamma^t \hat{c}_\theta(s_t, a_t)
\end{equation}
where $(s_t, a_t)$ is the state-action pair at timestep $t$ in the trajectory $\tau$ and $\hat{c}_\theta: \mathcal{S}\times\mathcal{A}\rightarrow (0,1)$ is the parameterized cost function. 
Therefore, the function $f$ estimates the cumulative cost of the trajectory $\tau$. For the function $g$, we choose an identity function. This choice of $g$ allows the bag score to be directly proportional to the sum of the cumulative costs of its trajectories. Given these choices for $f$ and $g$, the score function for the bag $\mathcal{B}$ is:
\begin{align}
    \label{eq:bag_score}
    Score(\mathcal{B}) = \dfrac{1}{K} \sum_{\tau\; \in\; \mathcal{B}} \sum_{t=0}^{\Ttau -1} \gamma^t \hat{c}_\theta(s_t, a_t)
\end{align}
This score function has an intuitive interpretation. As the number of trajectories in the bag, $K$ approaches infinity, the score function converges to the expected cumulative cost of a trajectory in bag $\mathcal{B}$:
\begin{align}
    \lim_{K \rightarrow\infty} Score(\mathcal{B}) &= \mathbb{E}_{\tau \sim \mathcal{B}} \left[ \sum_{t=0}^{\Ttau -1} \gamma^t \hat{c}_\theta(s_t, a_t) \right]
\end{align}
Then, the score function, equation \ref{eq:bag_score}, represents an empirical estimate of the expected cumulative cost of a trajectory associated with the bag $\mathcal{B}$. Assuming non-preferred trajectories have similar cost, then we can establish the following relationship:
\begin{align}
    \label{eq:limit_bags_relationship}
    \lim_{K \rightarrow \infty} Score(\mathcal{B} \sim \rho^N) > \lim_{K\rightarrow\infty} Score(\mathcal{B} \sim \rho^U)
\end{align}    
This means that as the number of trajectories in negative and unlabeled bags approaches infinity, the expected cumulative cost of the negative bag's trajectory will be greater than that of the unlabeled bag. This is based on the intuition that the negative bag sampled from the non-preferred dataset will have more constraint-violating behavior than the unlabeled bag, which, according to Lemma \ref{lm:large_k}, the unlabeled infinite-size bag is likely to contain at least one preferred trajectory.

Similarly, we can explain the relationship between the empirical scores of negative and unlabeled bags:
\begin{theorem}
    \label{th:emperical_score_relation}
    Assuming non-preferred trajectories have similar cost. Then, for some bag size $K$, the score of negative ($\mathcal{B}_n$) and unlabeled ($\mathcal{B}_u$) bag satisfies $Score(\mathcal{B}_n) > Score(\mathcal{B}_u)$ with probability
    \begin{equation}
        P(Score(\mathcal{B}_n) > Score(\mathcal{B}_u)) = 1 -(1-\alpha)^K \nonumber
    \end{equation}
    where $\alpha$ is the percentage of preferred trajectories in the unlabeled dataset.
\end{theorem}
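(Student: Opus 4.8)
The plan is to reduce the probabilistic inequality $Score(\mathcal{B}_n) > Score(\mathcal{B}_u)$ to the purely combinatorial event ``$\mathcal{B}_u$ contains at least one preferred trajectory,'' and then read off its probability from Lemma~\ref{lm:large_k}. First I would introduce the shorthand $\mathrm{cost}(\tau) = \sum_{t=0}^{\Ttau-1}\gamma^t \hat{c}_\theta(s_t,a_t)$ so that $Score(\mathcal{B}) = \frac{1}{K}\sum_{\tau \in \mathcal{B}}\mathrm{cost}(\tau)$ is simply the average per-trajectory cumulative cost over the bag. The hypothesis that non-preferred trajectories have similar cost is then used in its strong form: every non-preferred trajectory carries a common cumulative cost $c_N$, while any preferred trajectory carries cost $c_P < c_N$, the strict separation following from the definitions (preferred trajectories satisfy, and non-preferred ones violate, the cumulative-cost constraint), carried over to the learned cost $\hat{c}_\theta$.

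With this setup the score of the negative bag is immediate: since $\mathcal{B}_n$ is sampled with replacement from $\mathcal{D}^N$, all of its $K$ trajectories are non-preferred, so $Score(\mathcal{B}_n) = \frac{1}{K}\cdot K c_N = c_N$ deterministically. For the unlabeled bag, suppose it contains $j$ preferred and $K-j$ non-preferred trajectories; then $Score(\mathcal{B}_u) = \frac{1}{K}\left[\, j c_P + (K-j) c_N \,\right] = c_N - \frac{j}{K}(c_N - c_P)$. The key step is the resulting biconditional: because $c_N - c_P > 0$, we have $Score(\mathcal{B}_u) < c_N = Score(\mathcal{B}_n)$ exactly when $j \geq 1$, whereas $Score(\mathcal{B}_u) = Score(\mathcal{B}_n)$ when $j = 0$. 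Hence the event $\{Score(\mathcal{B}_n) > Score(\mathcal{B}_u)\}$ coincides with the event $\{\mathcal{B}_u \cap \mathcal{T}_p \neq \emptyset\}$ that the unlabeled bag contains at least one preferred trajectory.

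Finally I would invoke Lemma~\ref{lm:large_k} directly: the probability that a bag of $K$ trajectories drawn with replacement from $\mathcal{D}^U$, in which a fraction $\alpha$ are preferred, contains at least one preferred trajectory is $1 - (1-\alpha)^K$. Combining this with the event equality from the previous step gives $P(Score(\mathcal{B}_n) > Score(\mathcal{B}_u)) = 1 - (1-\alpha)^K$, as claimed.

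I expect the main obstacle to be the reverse direction of the biconditional, namely guaranteeing that the inequality is \emph{strict} precisely when a preferred trajectory is present and \emph{fails} (degenerates to equality) otherwise. This is exactly where the ``similar cost'' hypothesis must be read as exact equality of non-preferred costs: under a weaker ``approximately equal'' reading, an all-non-preferred unlabeled bag could score slightly above or below $c_N$, so the closed form $1-(1-\alpha)^K$ would hold only approximately. I would therefore state the equal-cost idealization explicitly and flag that the separation $c_P < c_N$ is what transfers the definitional distinction between constraint-satisfying and constraint-violating trajectories into the score comparison.
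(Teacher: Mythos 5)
Your proof takes essentially the same route as the paper's: reduce the event $\{Score(\mathcal{B}_n) > Score(\mathcal{B}_u)\}$ to the event that $\mathcal{B}_u$ contains at least one preferred trajectory, then read off the probability from Lemma~\ref{lm:large_k}. Your write-up is in fact more careful than the paper's two-line argument, since you make explicit the two hypotheses the reduction silently needs --- reading ``similar cost'' as exact equality of non-preferred cumulative costs under $\hat{c}_\theta$, and the strict separation $c_P < c_N$ --- without which the event equality, and hence the exact (rather than approximate) probability $1-(1-\alpha)^K$, would fail.
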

\begin{proof}
    The result follows directly from Lemma \ref{lm:large_k}. The unlabeled bag $\mathcal{B}_u$ contains at least one preferred trajectory with $1 - (1-\alpha)^K$ probability. Therefore, the scores also satisfy the above relationship with the same probability.
\end{proof} 

Given the relationship between the empirical scores of negative and unlabeled bags, we can train the cost function $\hat{c}_\theta$ using Bradley-Terry model \cite{bradley_terry_1952}. Therefore, the loss function can be defined as:
\begin{align}
    \label{eq:cost_learning}
    \mathcal{L}_\theta 
    = - \mathbb{E}_{\substack{\mathcal{B}_n\sim\rho^N,\\\mathcal{B}_u\sim \rho^U}} \left[ \log \dfrac{\exp(Score(\mathcal{B}_n))}{\exp(Score(\mathcal{B}_n)) + \exp(Score(\mathcal{B}_u))} \right]
\end{align}
By minimizing the above loss function, we can to train the cost function $\hat{c}_\theta$, which assigns a higher score to a negative bag than an unlabeled bag. In the next section, we discuss a method that uses this learned cost function to learn a preferred policy.

\subsection{Policy learning}
The learned cost function $\hat{c}_\theta$ estimates whether a given state-action pair is risky, i.e., likely to violate constraints. Since $\mathcal{D}^U$ contains both preferred and non-preferred trajectories, we can use this learned cost function to identify preferred trajectories that are likely to satisfy the CMDP constraints. Specifically, we identify the set of preferred trajectories as:
\begin{align}
    \mathcal{T}_{\hat{c}_\theta} := \left\{\tau \in \mathcal{D}^U \; \Big| \;  \sum_{t=0}^{\Ttau-1} \gamma^t \hat{c}_\theta(s_t, a_t) \leq \hat{b} \right\}    
\end{align}
where $\hat{b}$ is a constraint threshold. We can then recover the preferred policy by optimizing the following loss function:
\begin{align} \label{eq:hard_weighted_bc}
    \min_\pi \sum_{\tau  \mathcal{T}_{\hat{c}_\theta}} \left[ \sum_{t=0}^{\Ttau -1}\mathcal{L}_\pi(s_t, a_t) \right]
\end{align}
where $(s_t, a_t)$ is the state-action pair at timestep $t$ in trajectory $\tau$ and $\mathcal{L}_\pi(\cdot, \cdot)$ is the behavior cloning loss function. 

Alternatively, instead of applying a hard threshold, $\hat{b}$, to select trajectories, we can take a soft weighting approach by assigning weight to each trajectory based on its estimated cost. This results in the following weighted objective:
\begin{align}
    \label{eq:policy_learning}
    \min_\pi \sum_{\tau \in \mathcal{D}^U} \left[w(\tau) \sum_{t=0}^{\Ttau -1}\mathcal{L}_\pi(s_t, a_t)\right]
\end{align}
where the trajectory weight $w(\tau)$ is defined as:
\begin{align}
    \label{eq:unnormalized_weight}
    w(\tau) = \exp \left( -\sum_{t=0}^{\Ttau-1} \gamma^t \hat{c}_\theta(s_t, a_t)\; /\; \beta \right)
\end{align}
and $\beta > 0$ is a hyperparameter. For small value of $\beta$, this weighting scheme assigns higher weight to constraint-satisfying preferred behaviors over non-preferred ones. This approach also allows us to leveraging all available data while prioritizing policy to mimic preferred behaviors while discouraging non-preferred behaviors.

\subsection{Extension to learning cost function with partial trajectories}
The method so far assumes the use of full trajectory length; however, using the entire trajectory can be computationally expensive or sometimes impractical.
Therefore, we modify our algorithm to construct bags with partial trajectories instead of full trajectory lengths. This modification can lead to incorrect sampling of negative bags from non-preferred dataset. The partial trajectories sampled from non-preferred dataset can exhibit preferred behavior, resulting in incorrect labeling of these bags. However, even if some negative bags are mislabeled, the relationship between the scores of the negative bag and unlabeled bag, equation \ref{eq:limit_bags_relationship}, can still hold provided the number of trajectories, $K$, in each bag, is sufficiently large. This is because the score typically reflects the average cost of the trajectories within a bag. Even with some preferred segments in the negative bag, the overall average cost in the negative bag is expected to be higher than that in the unlabeled bag, given a sufficiently large bag size.

Training using partial trajectories allows the SafeMIL algorithm to learn an expressive cost function from only a limited number of non-preferred trajectories. During training, we randomly sample pairs of negative and unlabeled bags, each containing $K$ partial trajectories of length $H$. The score for each bag is then used to compute ``logit" in a binary cross-entropy loss function. The learned cost function is used to identify preferred behavior from unlabeled dataset. We learn a safe policy via behavior cloning on these preferred behaviors likely to satisfy CMDP constraints.
The training procedure is shown in Algorithm \ref{alg:safemil_algorithm}.

\begin{algorithm}[tb]
    \caption{SafeMIL}
    \label{alg:safemil_algorithm}
    \textbf{Input}: non-preferred dataset $\mathcal{D}^N$, unlabeled dataset $\mathcal{D}^U$, bag size $K$, partial trajectory length $H$, $\beta$\\
    \textbf{Parameter}: a parameterized cost function $\hat{c}_\theta$ and a parameterized  policy network $\pi$\\
    \textbf{Output}: preferred policy $\pi \approx \pi^\star$
    \begin{algorithmic}[1] 
        \FOR{$n=1,2,\dots$}
        \STATE Sample bag $\mathcal{B}_n \sim \mathcal{D}^N$ of bag size $K$ and trajectory length $H$
        \STATE Sample bag $\mathcal{B}_u \sim \mathcal{D}^U$ of bag size $K$ and trajectory length $H$
        \STATE Update cost function $\hat{c}_\theta$ by optimizing the Eq. \ref{eq:cost_learning}
        \STATE Update policy $\pi$ on $\mathcal{D}^U$ using the Eq. \ref{eq:policy_learning}
        \ENDFOR
    \end{algorithmic}
\end{algorithm}

\section{Experiments}
\label{sec:experiments}
\begin{figure*}[t!]
    \centering
    \includegraphics[width=0.7\textwidth]{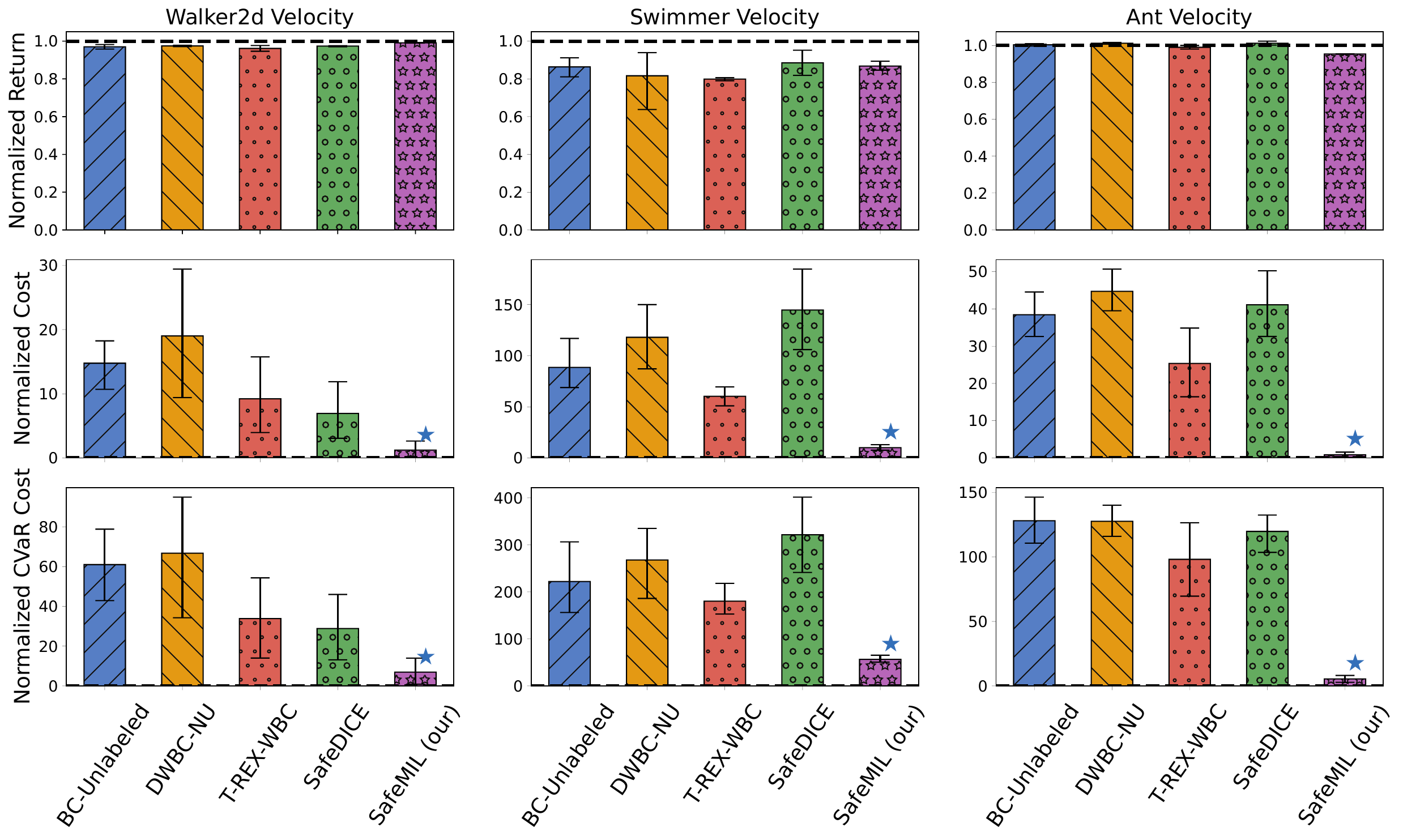}
    \caption{\textbf{Performance Comparison.} We report the final bootstrapped mean performance of the algorithm on Walker2d-Velocity, Swimmer-Velocity, Ant-Velocity task after 1 million training steps. Mean and 95\% CIs over 5 seeds. We observe that our method outperforms all the baselines and can recover low cost safe policies without compromising reward performance. We also report the learning curves for all the algorithms in Fig. \saferef{fig:norm_velocity_task_performance}{8} in Appendix.}
    \label{fig:velocity_task_performance}
\end{figure*}

\begin{figure*}[htp]
    \centering
    \includegraphics[width=0.7\textwidth]{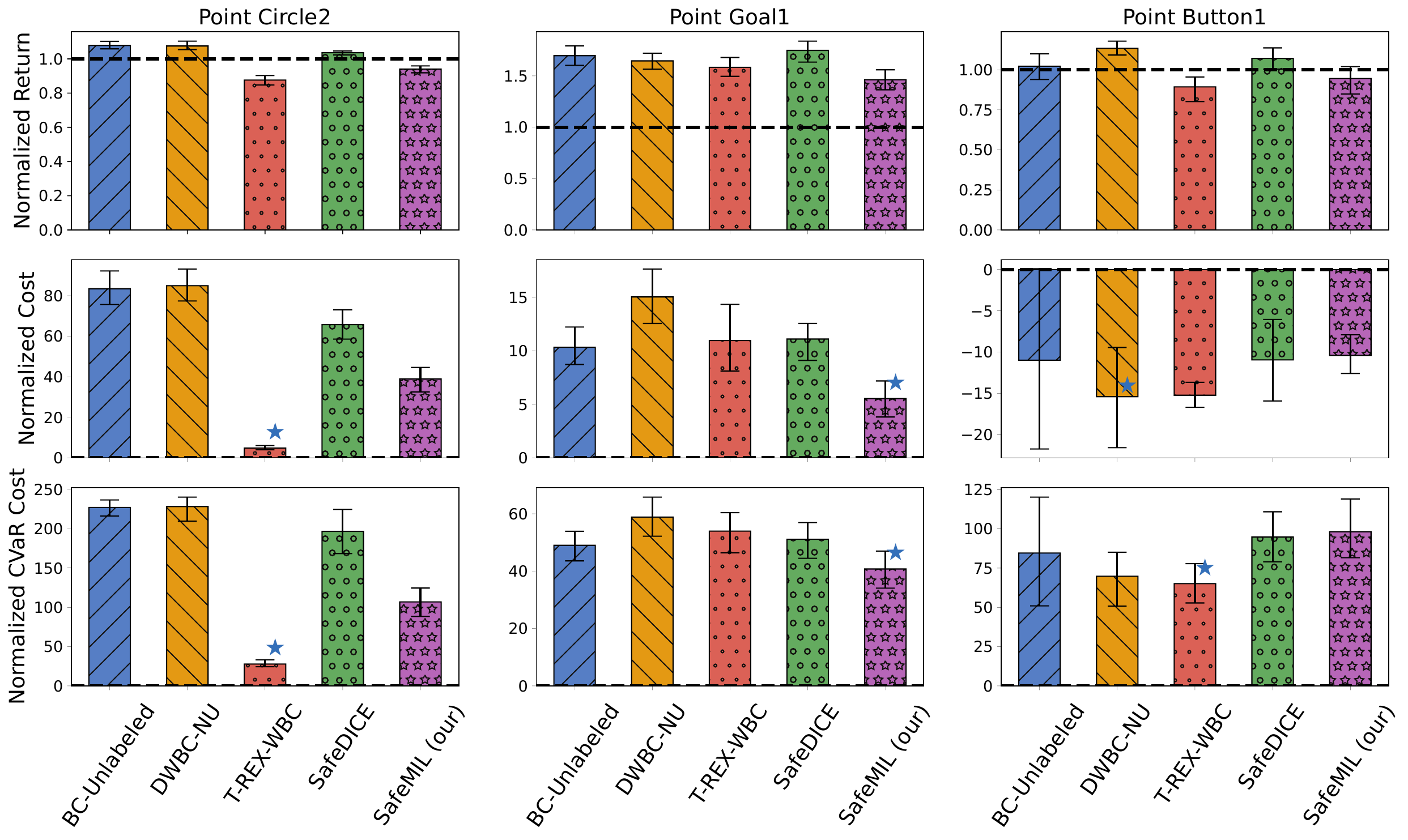}
    \caption{\textbf{Performance Comparison.} We report the final bootstrapped mean performance of the algorithm on Point-Circle2, Point-Goal1, Point-Button1 tasks after 1 million training steps. Mean and 95\% CIs over 5 seeds. We observe that the Point-Goal1 environment, our algorithm performs better, while maintaining competitive performance in the remaining environments. We also report the learning curves for all the algorithms in Fig. \saferef{fig:norm_point_task_performance}{9} in Appendix.}
    \label{fig:point_task_performance}
\end{figure*}

We evaluate our algorithm against the state-of-the-art offline safe IL algorithm using the  Datasets for offline Safe
RL (DSRL) on a set of benchmark tasks  \cite{offline_safe_rl_dataset_jmlr_2024}. We conduct experiments on the following tasks: i) MuJoCo-based velocity-constrained tasks (Walker-Velocity, Swimmer-Velocity, Ant-Velocity), where agents are required to move as fast as possible while adhering to the velocity limits; and ii) navigation tasks (Point-Circle2, Point-Goal1, Point-Button1), that require agents to maximize performance while avoiding collisions or contact with hazardous regions (illustrated in Appendix Fig: \saferef{fig:velocity_task_env}{5}, \saferef{fig:point_task_env}{6}). These tasks collectively provide comprehensive and realistic scenarios for evaluating the safety and the performance of offline safe IL algorithms. 

We select a limited number of non-preferred trajectories ($\mathcal{D}^N$) and a large number of unlabeled trajectories ($\mathcal{D}^U$) from the DSRL dataset. $\mathcal{D}^U$ contain a mix of preferred and non-preferred trajectories. We removed all the reward and cost information from $\mathcal{D}^N$ and $\mathcal{D}^U$ trajectory dataset. Our objective is to recover low-cost behavior while preserving high-reward performance from the unlabeled data. We have 50 trajectories in $\mathcal{D}^N$ and 200 trajectories in $\mathcal{D}^U$. In Appendix D., we list all the other details regarding $\mathcal{D}^N$ and $\mathcal{D}^U$ dataset for each environment.

\paragraph{Baselines:} We compare our algorithm against four baselines. \textbf{(1) BC-Unlabeled}, a BC policy on the unlabeled dataset $\mathcal{D}^U$. This serves as a baseline to evaluate the performance of BC when the dataset contains a mixture of preferred and non-preferred trajectories. \textbf{(2) SafeDICE}, \citet{safedice_neurips_2023}, directly estimates the stationary distribution corrections for the preferred behavior and then trains a weighted BC policy. \textbf{(3) DWBC-NU}, a variant of the discriminator-weighted behavior cloning algorithm \cite{dwbc_icml_2022} that uses \textit{negative-unlabeled} learning to train the discriminator model. The trained discriminator is then used as weight in the weighted BC loss function. \textbf{(4) T-REX-WBC}, \citet{trex_icml_2019}, a preference-based method that learns a reward function to prefer unlabeled trajectories over non-preferred ones. The learned reward function is then used as a weight in the weighted BC loss function. The implementation details of these baseline algorithms are in the Appendix C.

We also train an offline constrained RL policy, COptiDICE \cite{coptidice_iclr_2022}, on an unlabeled dataset augmented with ground-truth reward and cost annotations for 1 million training steps. Constrained-RL policy represents the optimal performance achievable by a policy that simultaneously achieves high reward and low cost. We treat this Constrained-RL policy as the preferred low-cost policy. No other methods have access to reward and cost information. We do not compare our algorithm with other constrained RL algorithms, as our setting, offline safe IL, fundamentally differs from offline safe RL.

We compare the performance of SafeMIL and other baseline algorithms in terms of task performance (i.e., expected episodic return) and safety (i.e., expected episodic cost). We report the results with the following metrics. \textbf{(1) Normalized Return} scales the episodic return of a given policy, such that $0$ represents the episodic return from a random policy, while $1$ represents the episodic return achieved by the Constrained-RL policy. \textbf{(2) Normalized Cost} scales a given policy's episodic cost by subtracting the Constrained-RL policy's episodic cost. It ensures that $0$ represents the cost achieved by the Constrained-RL policy. \textbf{(3) Normalized Conditional Value at Risk performance (CVaR) 20\% Cost}, similar to \textit{Normalized Cost} it scales the policy's mean episodic cost of the worst 20\% runs by subtracting the Constrained-RL policy's episodic cost.

Our primary focus of the comparison is to evaluate safety (\textit{Normalized Cost}, \textit{Normalized CVaR@20\% Cost}). 
We consider a policy to have successfully recovered the preferred behavior if its performance closely matches the Constrained-RL policy. This is evidenced by the \textit{Normalized Cost} less than or close to $0$, and the \textit{Normalized Return} being greater than or close to $1$. All plots are generated by averaging the performance of 50 trajectories generated from the learned policy. To assess statistical significance, we generate 1000 bootstrap samples from the data, using results from 5 different random seeds, and we plot the resulting 95\% confidence intervals.

Through our experiments, we focus on answering the following questions:
\begin{enumerate}
    \item \textit{Performance Comparison}: How does our algorithm compare to other baselines in terms of learning safer policies that satisfy cost constraints while maintaining reward performance?
    \item {Sensitivity to the choice of bag size}: How does our algorithm's performance vary with different bag sizes $K$?
    \item {Sensitivity to the choice of trajectory length}: What is the impact of trajectory length on our algorithm's performance?
    \item {Effect of different weighting scheme in policy learning}: How does our algorithm's performance change if we define weights on individual state-action pairs instead of the entire trajectory when learning the policy?
\end{enumerate}

\begin{figure*}[t]
    \centering
    \includegraphics[width=0.8\textwidth]{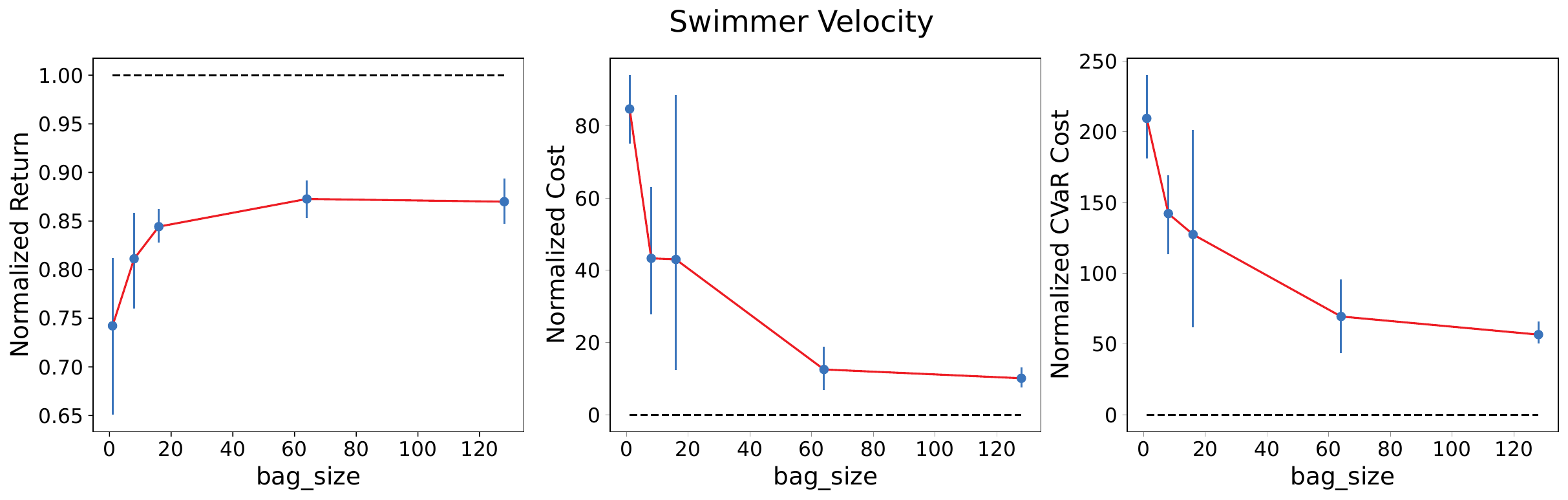}
    \caption{\textbf{Sensitivity to Bag Size.} We report the final mean performance of the algorithm on the Swimmer-Velocity environment for different bag sizes $K=\{1, 8, 16, 64, 128\}$, after training for 1 million steps. Mean and 95\% CIs over 5 seeds. We observe that increasing the bag size $(K)$ lead to a higher probability of finding preferred trajectories within the unlabeled bag. Thereby, improving the algorithm's safety performance while maintaining reasonable episode return.}
    \label{fig:swimmer_sensitivity_bag_size}
\end{figure*}

\begin{figure*}[htp]
    \centering
    \includegraphics[width=0.8\textwidth]{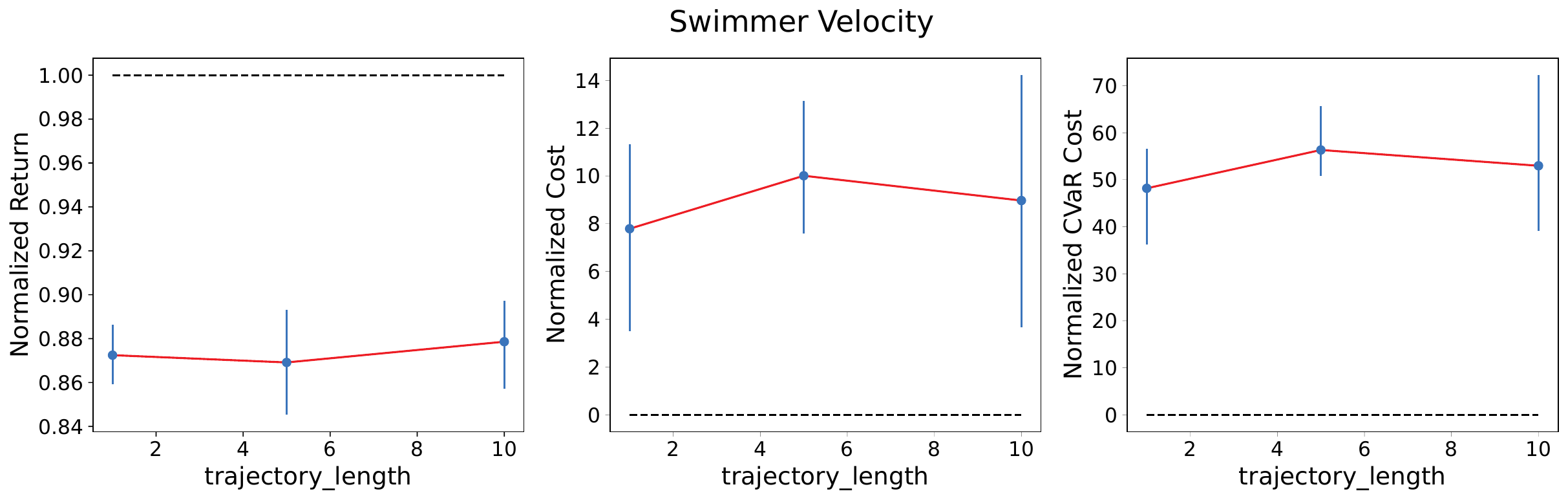}
    \caption{\textbf{Sensitivity to Trajectory Length.} We report the final mean performance of the algorithm on the Swimmer-Velocity environment for different partial trajectory lengths $H=\{1, 5, 10\}$, after training for 1 million steps. Mean and 95\% CIs over 5 seeds. For a sufficient bag size of $128$, we observe that safety performance is stable across different trajectory lengths.}
    \label{fig:swimmer_sensitivity_trajectory_length}
\end{figure*}

\subsection{Performance Comparison}
In both velocity-constrained tasks (Fig. \ref{fig:velocity_task_performance}) and navigation tasks (Fig. \ref{fig:point_task_performance}), we verify that when the unlabeled dataset contains non-preferred (constraint-violating) trajectories, the standard BC policy has a higher cost and therefore can exhibit non-preferred behavior.

SafeMIL can outperform all baseline algorithms in velocity-constrained tasks. It is also able to recover preferred behavior from the unlabeled dataset, with safety performance closely approximating that of Constrained-RL. SafeMIL learns a safer policy for navigation tasks in Point-Goal1 and is competitive compared to other baselines in Point-Circle2 and Point-Button1. In Point-Circle2, T-REX-WBC learns a policy with a better constraint-satisfying policy. SafeMIL achieves a median performance of $3.7\times$ better than the best baseline algorithm across all environments. In Appendix F., we report other supporting results and the learning curves for all the algorithms.

\subsection{Sensitivity Analysis}
This section investigates how the bag's size and the trajectory length impact our algorithm's performance. We demonstrate that increasing the bag size and using partial trajectory length does not alter the algorithm's safety or the overall reward achieved.

\paragraph{\textit{Sensitivity to Bag Size}:}
To assess the impact of bag size on our SafeMIL algorithm, we conducted experiments on the Swimmer-Velocity environment using varying bag sizes, $K = \{1, 8, 16, 64, 128\}$. We train the SafeMIL algorithm with these different bag sizes for 1 million training steps and report the final performance in  Fig. \ref{fig:swimmer_sensitivity_bag_size}. Based on Theorem \ref{th:emperical_score_relation}, we expect that increasing the bag size $(K)$ would lead to a higher probability of finding preferred trajectories within the unlabeled bag. Therefore, increasing bag size would improve the algorithm's safety performance  while maintaining reasonable episode return. As shown in Fig. \ref{fig:swimmer_sensitivity_bag_size}, as $K$ increases, the safety performance also improves (lower cost) and becomes stable after some bag size. 

\paragraph{\textit{Sensitivity to Trajectory Length}:}
To evaluate the effect of trajectory length on our SafeMIL algorithm, we performed experiments on the Swimmer-Velocity environments for sufficiently large bag size ($K = 128$) with varying trajectory length, $H = \{1, 5, 10\}$. The SafeMIL algorithm is trained with these trajectory lengths for 1 million training steps, and the final performance results are summarized in Fig. \ref{fig:swimmer_sensitivity_trajectory_length}. We expect that for sufficiently large bag size, the relationship between the scores of the negative bag and unlabeled bag, equation \ref{eq:limit_bags_relationship}, will still hold. Therefore, the safety performance of the task should not vary. As shown in Fig. \ref{fig:swimmer_sensitivity_trajectory_length}, safety performance is stable irrespective of the trajectory length. This finding supports using partial trajectories for training, as they offer a computationally efficient alternative to utilizing full-length trajectories without compromising overall performance. 

\subsection{Effect of different weighing scheme}
This section analyzes the effect of different weighing scheme in policy learning. Specifically, we want to verify the algorithm's performance when we define weights on individual state-action pair transitions instead of the entire trajectory when learning the policy. Since $\hat{c}_\theta$ estimates the likelihood of a state-action pair being risky, then we can learn the safe policy by defining the weight in the BC loss function at each state-action $(s, a)$ pair as: $(1 - \hat{c}_\theta(s, a))$. The loss function is then defined as:
\begin{align}
    \min_\pi \sum_{(s,a) \in \mathcal{D}^U} (1-\hat{c}_\theta(s,a)) \mathcal{L}_\pi(s,a)
\end{align}
We refer to this loss function as the transition weighted BC loss function and call equation \ref{eq:policy_learning} as trajectory weighted BC loss function. In Appendix, Fig. \saferef{fig:ablation_study}{12}, \saferef{fig:ablation_study_goal1}{13} reports the result of using transition and trajectory weighted BC loss function to train safe policy. We observe that both weighing schemes for policy learning have similar safety and return performance for Swimmer-Velocity and Point-Goal1 tasks. 

\section{Conclusion}

This work introduces SafeMIL, an offline safe imitation learning algorithm. We frame the problem of learning a cost function as an MIL task. Based on the intuition that non-preferred trajectories incur higher costs than unlabeled ones on average, we learn the cost function that explains this intuition. Subsequently, we employ this learned cost function to identify preferred behavior and use it to learn safe policy.
The experiment demonstrates that SafeMIL can learn safer policies in the constrained RL benchmarks. 

\section{Acknowledgments}
Returaj Burnwal acknowledges financial support from the Prime Minister’s Research Fellowship (PMRF), Ministry of Education, India.


\clearpage

\addcontentsline{toc}{chapter}{Suppl}

\setcounter{section}{0}
\renewcommand{\thesection}{\Alph{section}}
\onecolumn

\section{A.\; Proof of Lemma 1}
\label{sup:lm_1_proof}
\textbf{Lemma 1.} \textit{
    Let $\mathcal{T}_p$ denote the set of all preferred trajectories. Let $\alpha \in (0,1)$ represent the proportion of preferred trajectories within the unlabeled dataset $\mathcal{D}^U$. Consider a bag $\mathcal{B}$ containing $K$ trajectories sampled with replacement from $\mathcal{D}^U$. Then, the probability that bag $\mathcal{B}$ contains at least one preferred trajectory is given by:
    \begin{align}
        P(\mathcal{B}\; \cap\; \mathcal{T}_p  \neq \emptyset) = 1 - (1-\alpha)^K \nonumber
    \end{align}    
}

\begin{proof}
    Given that $\alpha$ represents the proportion of preferred trajectories, $(1-\alpha)$ naturally represents the proportion of non-preferred trajectories in unlabeled dataset $\mathcal{D}^U$. Therefore, the probability that a bag $\mathcal{B}$ contains only non-preferred trajectories is $(1-\alpha)^K$. Then the probability of the complementary event i.e., the bag containing at least one preferred trajectory is $1-(1-\alpha)^K$.
\end{proof} 
\section{B.\; Constrained RL Environments}
We conducted our experiments on the following environments.

\begin{figure*}[h!]
    \centering
    \includegraphics[width=0.9\textwidth]{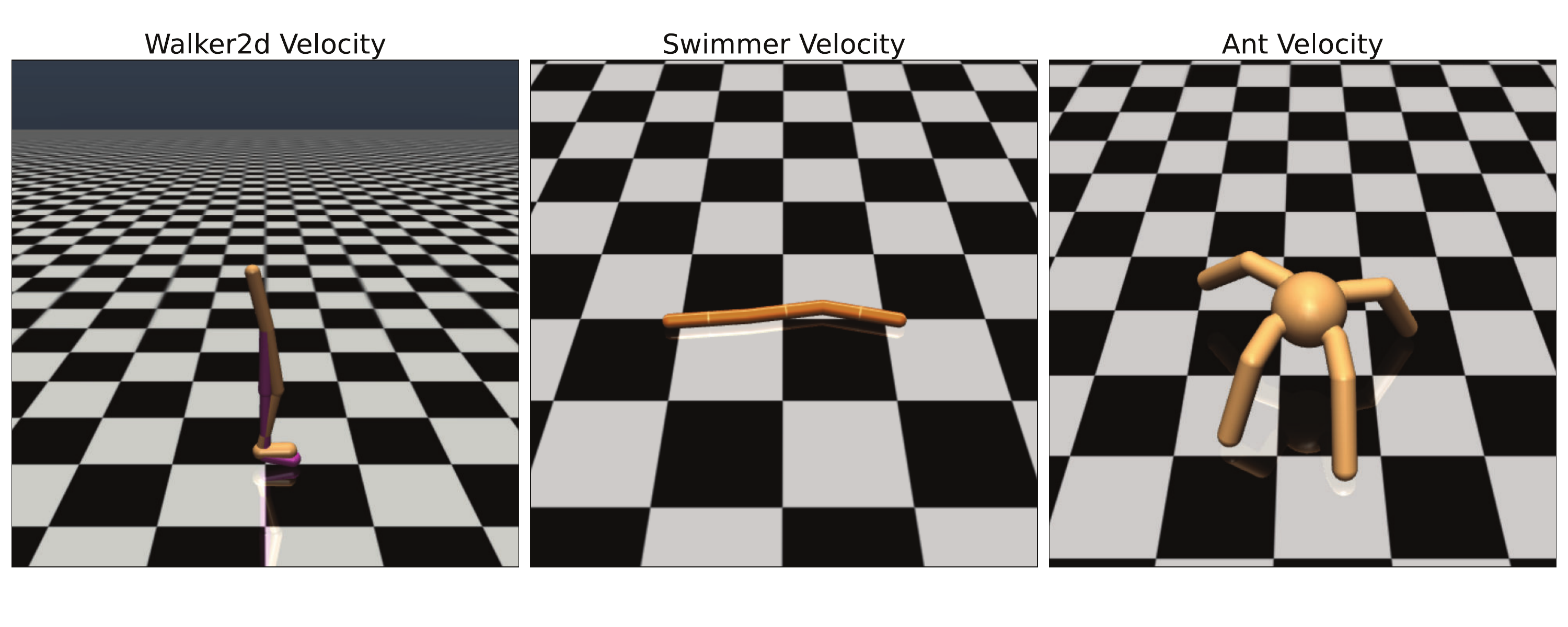}
    \caption{MuJoCo-based velocity-constrained tasks of DSRL environment. For each task, the agent needs to move as fast as possible while adhering to the velocity limits.}
    \label{fig:velocity_task_env} 
\end{figure*}

\begin{figure*}[h!]
    \centering
    \large 
    \includegraphics[width=0.9\textwidth]{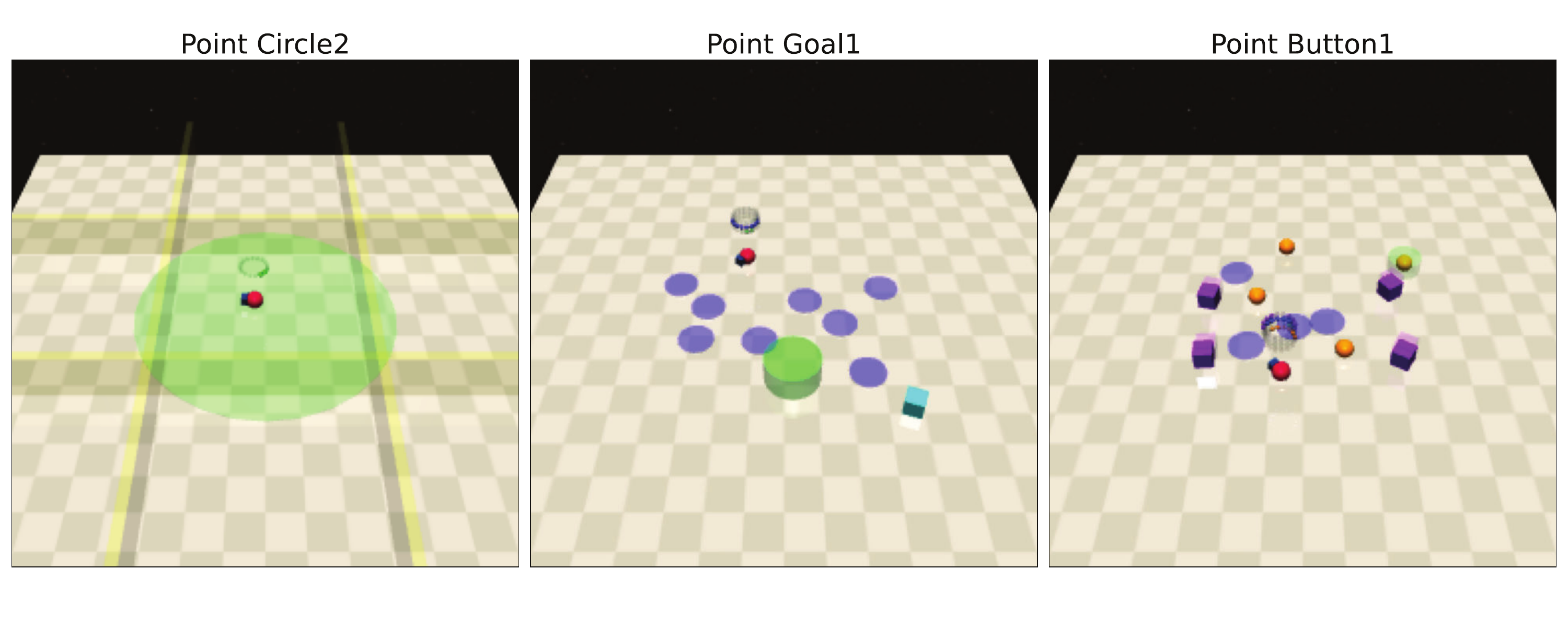}
    \caption{Navigation tasks of DSRL environment. For each task, the agent (red robot) receive costs when crossing the yellow boundaries, entering hazards indicated by blue circles or when touching the purple obstacles. The objectives of each task are as follows: \textbf{Point Circle2}: agent needs to circle around the center area as close to the boundaries for optimal reward, \textbf{Point Goal1}: Move to a series of green goal positions, \textbf{Point Button1}: Press a series of highlighted goal buttons.}
    \label{fig:point_task_env}
\end{figure*}


\section{C.\; Implementation Details of the baseline algorithms}
\label{sup:implementation_details}
In this section, we provide the implementation details of the baseline algorithms.

\subsection{SafeDICE}
SafeDICE algorithm first estimates the log ratio of $\rho^N(s,a)$ and $\rho^U(s,a)$ by training a discriminator model:
\begin{align*}
    c^* = \text{arg}\max_c \mathbb{E}_{(s,a)\sim\rho^N}[\log c(s,a)] + \mathbb{E}_{(s,a)\sim\rho^U}[\log (1-c(s,a))]
\end{align*}
The discriminator model $c^*$ is then used to compute the log ratio:
\begin{align*}
    \text{log ratio} = r_\alpha(s,a) = \log \dfrac{1-(1+\alpha)c^*(s,a)}{(1-\alpha)(1-c^*(s,a))}
\end{align*}
The log ratio estimate is then used to estimate $\nu$ network:
\begin{align*}
    \min_\nu (1-\gamma)\mathbb{E}_{s\sim\rho_0}[\nu(s)] + \log\mathbb{E}_{(s,a,s')\sim\rho^U}\left[\exp(A_\nu(s,a,s'))\right]
\end{align*}
where $A_\nu(s,a,s') = r(s,a) + \gamma \nu(s')$. Then, the safe policy is estimated as follows:
\begin{align*}
    \min_\pi - \dfrac{\mathbb{E}_{(s,a,s') \sim \rho^U}[w_\nu(s,a,s')\mathcal{L}_\pi(s,a)]}{\mathbb{E}_{(s,a,s')\sim\rho^U}[w_\nu(s,a,s')]}
\end{align*}
where $w_\nu(s,a,s') = \exp(A_{\nu^*}(s,a,s'))$ and $\mathcal{L}_\pi$ denotes the BC loss function.

\subsection{T-REX-WBC}
We train the reward function based on  Bradley-Terry model \cite{bradley_terry_1952}, which is as follows:
\begin{align*}
    \min_r - \sum_{(i,j)\in \mathcal{P}} \log \dfrac{\exp\left( \sum_{(s,a)\in\tau_i} r(s,a) \right)}{\exp\left( \sum_{(s,a)\in\tau_i} r(s,a) \right) + \exp\left( \sum_{(s,a)\in\tau_j} r(s,a) \right)}
\end{align*}
where $\tau_i$ and $\tau_j$ are trajectories and $\mathcal{P} = \{(i,j): \tau_i \succ \tau_j\}$. In our setting we assumed that unlabeled demonstrations are better than non-preferred demonstrations, i.e. $\tau_i \succ \tau_j \;\forall\; \tau_i \sim \rho^U, \tau_j \sim \rho^N$. Based on this reward function we train our safe policy as follows:
\begin{align*}
    \min_\pi \mathbb{E}_{(s,a)\sim\rho^U}[r(s,a)\mathcal{L}_\pi(s,a)]
\end{align*}

\subsection{DWBC-NU}
We modified DWBC and used negative-unlabeled learning to train the discriminator model as follows:
\begin{align*}
    \min_d\; &\eta \mathbb{E}_{(s,a)\sim\rho^N} [-\log d(s,a, \log \pi)] \\
    &+ \mathbb{E}_{(s,a)\sim\rho^U}[-\log(1-d(s,a,\log \pi))] \\
    &- \eta\mathbb{E}_{(s,a)\sim\rho^N}[-\log(1-d(s,a,\log\pi))]
\end{align*}
where $d$ is the discriminator model and $\eta$ is the hyperparameter. We then train safe policy by using this discriminator model to construct weight in the weighted BC loss function as follows:
\begin{align*}
    \min_\pi \mathbb{E}_{(s,a)\sim\rho^U}[(1-d(s,a))\mathcal{L}_\pi(s,a)]
\end{align*}

\subsection{COptiDICE}
An offline constrained-RL algorithm, COptiDICE \cite{coptidice_iclr_2022}, requires access to both cost and reward annotation to train. We use COptiDICE to compare our learned policies performance. We use the COptiDICE implementation from OSRL library \cite{offline_safe_rl_dataset_jmlr_2024}. Github link: \url{https://github.com/liuzuxin/OSRL/tree/main}
\section{D.\; Dataset Related Details}
\label{sup:dataset_related_details}
To evaluate our approach, we focused on the velocity-constrained and navigation tasks from the DSRL dataset \cite{offline_safe_rl_dataset_jmlr_2024}. We define preferred and non-preferred trajectories within the dataset based on performance criteria: trajectories with above 50\% total reward and within the top 25\% of total cost are classified as non-preferred, while those in the bottom 25\% of total cost and above 50\% in total reward are considered as preferred (illustrated in Figure \ref{fig:dsrl_dataset}). Using these preferred and non-preferred trajectories we created two datasets: 1) non-preferred dataset: comprising 50 non-preferred trajectories and 2) unlabeled dataset: containing 200 trajectories, a mixture of both preferred and non-preferred trajectories. Table 1 summarizes the tasks, safety constraints, and dataset details for each domain used in our experiments.


\begin{figure*}[!htb]
    \centering
    \includegraphics[width=0.8\textwidth]{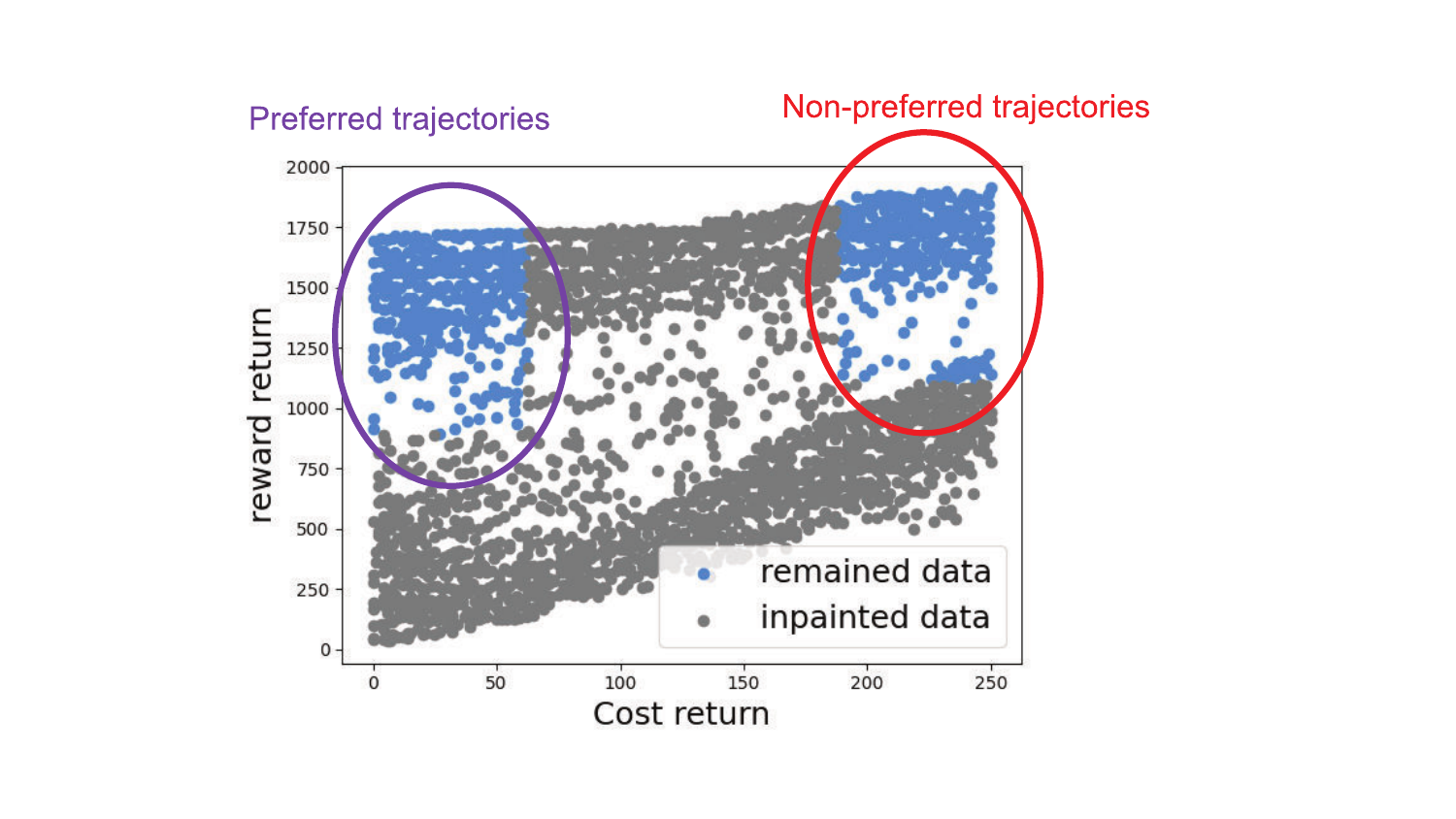}
    \caption{Defining preferred and non-preferred trajectories in DSRL dataset. With upper left corner as preferred and upper right corner as non-preferred trajectories.}
    \label{fig:dsrl_dataset}
\end{figure*}

\begin{table}[!htb]
\centering
\addtolength{\tabcolsep}{-3pt}
\def\arraystretch{1.5}
\begin{tabular}{@{}lcccccc@{}}
\toprule
                                            & \multicolumn{3}{c}{Velocity Constrained} & \multicolumn{3}{c}{Navigation} \\ \midrule\midrule
Task Specification                          & Velocity     & Velocity    & Velocity    & Circle    & Goal    & Button   \\
Type of agent                               & Walker2d     & Swimmer     & Ant         & Point     & Point   & Point    \\
Difficulty Level                            & -            & -           & -           & 2         & 1       & 1        \\ \midrule
\# unlabeled demonstrations                 & 200          & 200         & 200         & 200       & 200     & 200      \\
\# non-preferred demonstrations             & 50           & 50          & 50          & 50        & 50      & 50       \\ \midrule
Mean cost of preferred demonstrations       & 12.08        & 9.94        & 16.46       & 11.97     & 11.96   & 12.47    \\
Mean cost of non-preferred demonstrations   & 324.70       & 166.64      & 221.30      & 255.14    & 88.0    & 174.18   \\
Mean return of preferred demonstrations     & 2501.99      & 128.99      & 1864.79     & 39.17     & 22.70   & 27.67    \\
Mean return of non-preferred demonstrations & 2941.42      & 209.89      & 2864.90     & 50.20     & 22.97   & 30.10    \\ \bottomrule
\end{tabular}
\fontsize{9.0pt}{10.25pt}\selectfont
\captionsetup{justification=centering}
\caption{Task specification of each domain used in our experimental results}
\label{table:task_specification}
\end{table}

\section{E.\; Hyperparameter configurations}
For fair comparison, we use the same architecture and learning rate to train the policy, discriminator, reward, and  cost models of each algorithm. Table \ref{table:hyperparameter} summarizes the hyperparameter configurations that we used in our experiments.
We use the same hyperparameters throughout our experiments, except where explicitly stated.

\begin{table}[!htb]
\centering
\addtolength{\tabcolsep}{-3pt}
\def\arraystretch{1.5}
\begin{tabular}{@{}lccccc@{}}
\toprule
Hyperparameters                              & BC-Unlabeled      & DWBC-NU           & T-REX-WBC         & SafeDICE          & SafeMIL (ours)                                                                                 \\ \midrule \midrule
$\gamma$ (discount factor)                   & 0.99              & 0.99              & 0.99              & 0.99              & 0.99                                                                                           \\
learning rate (actor)                        & $1\times 10^{-5}$ & $1\times 10^{-5}$ & $1\times 10^{-5}$ & $1\times 10^{-5}$ & $1\times 10^{-5}$                                                                              \\
network size (actor)                         & $[256, 256]$      & $[256, 256]$      & $[256, 256]$      & $[256, 256]$      & $[256, 256]$                                                                                   \\
learning rate (cost)                         & -                 & -                 & -                 & $1\times 10^{-5}$ & $1\times 10^{-5}$                                                                              \\
network size (cost)                          & -                 & -                 & -                 & $[256, 256]$      & $[50, 256, 256]$                                                                               \\
learning rate (discriminator / reward model) & -                 & $1\times 10^{-5}$ & $1\times 10^{-5}$ & $1\times 10^{-5}$ & -                                                                                              \\
network size (discriminator / reward model)  & -                 & $[256, 256]$      & $[256, 256]$      & $[256, 256]$      & -                                                                                              \\
gradient penalty coefficient                 & -                 & -                 & -                 & 10                & -                                                                                              \\
weight decay                                 & 0.01              & 0.01              & 0.01              & 0.01              & 0.01                                                                                           \\
$\eta$                                       & -                 & 0.5               & -                 & -                 & -                                                                                              \\
$\beta$                                      & -                 & -                 & -                 & -                 & 0.5                                                                                            \\ \midrule\midrule
bag size                                     & -                 & -                 & -                 & -                 & 128                                                                                            \\
trajectory length                            & -                 & -                 & 5                 & -                 & \begin{tabular}[c]{@{}l@{}}5 (Velocity Tasks)\\ 10 (Navigation Tasks)\end{tabular} \\
batch size                                   & 128               & 128               & 128               & 128               & 32                                                                                             \\
\# training steps                            & $1,000,000$       & $1,000,000$       & $1,000,000$       & $1,000,000$       & $1,000,000$                                                                                    \\ \bottomrule
\end{tabular}
\fontsize{9.0pt}{10.25pt}\selectfont
\captionsetup{justification=centering}
\caption{Hyperparameters used in our experimental results}
\label{table:hyperparameter}
\end{table}

\clearpage

\section{F.\; Additional results}
\subsection{Performance Comparison Results}
\label{sup:performance_comparison}

We report the learning curves of all the algorithms for 1 million timestep.
\begin{figure*}[!htb]
    \centering
    \includegraphics[width=0.9\textwidth]{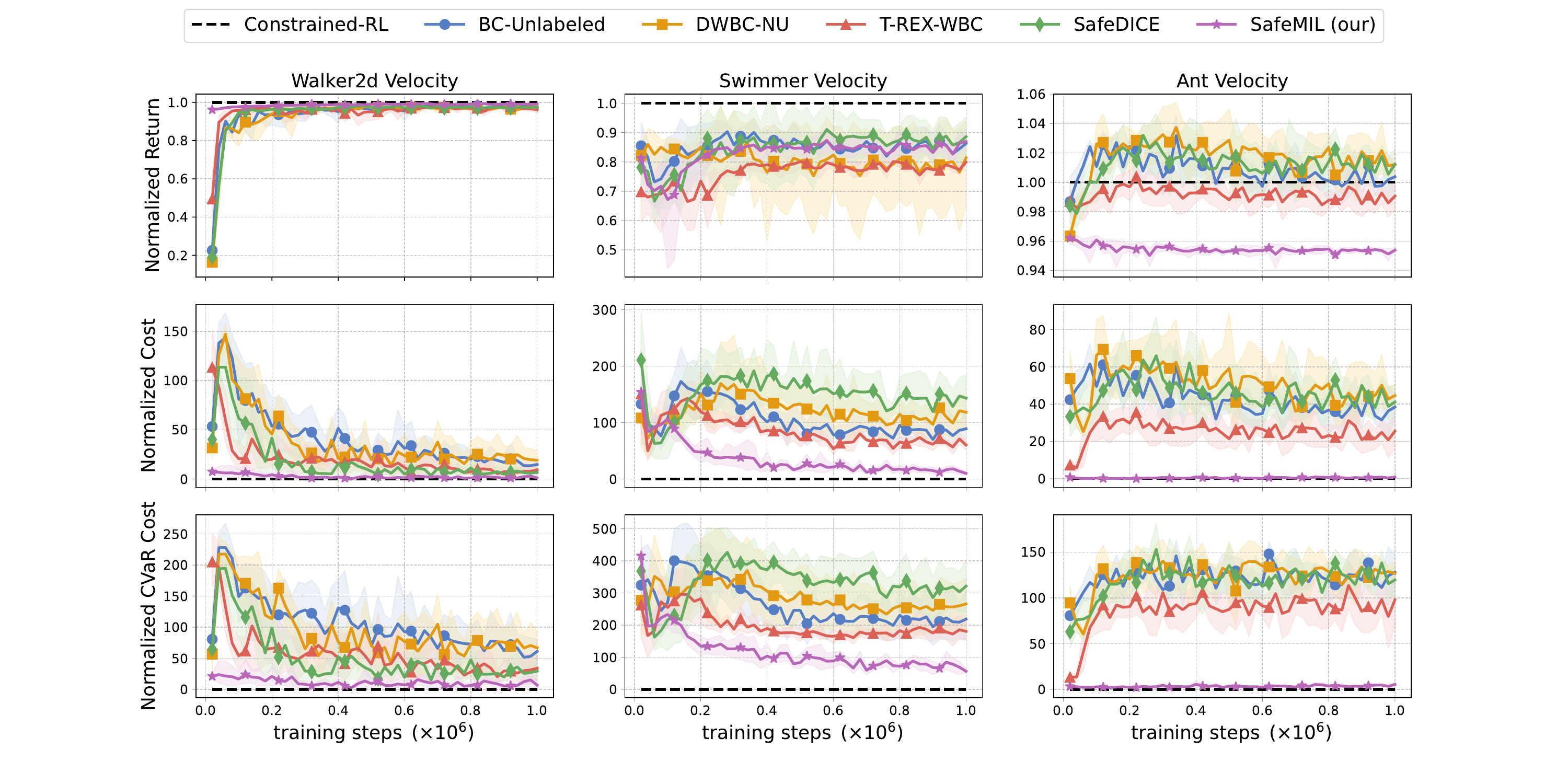}
    \caption{\textbf{Performance Comparison.} Experimental results on Walker2d-Velocity, Swimmer-Velocity, Ant-Velocity task. Shaded  area represents the standard error. In velocity-constrained tasks, our method is able to recover safer policies without compromising reward performance.}
    \label{fig:norm_velocity_task_performance}
\end{figure*}

\begin{figure*}[!htb]
    \centering
    \includegraphics[width=0.9\textwidth]{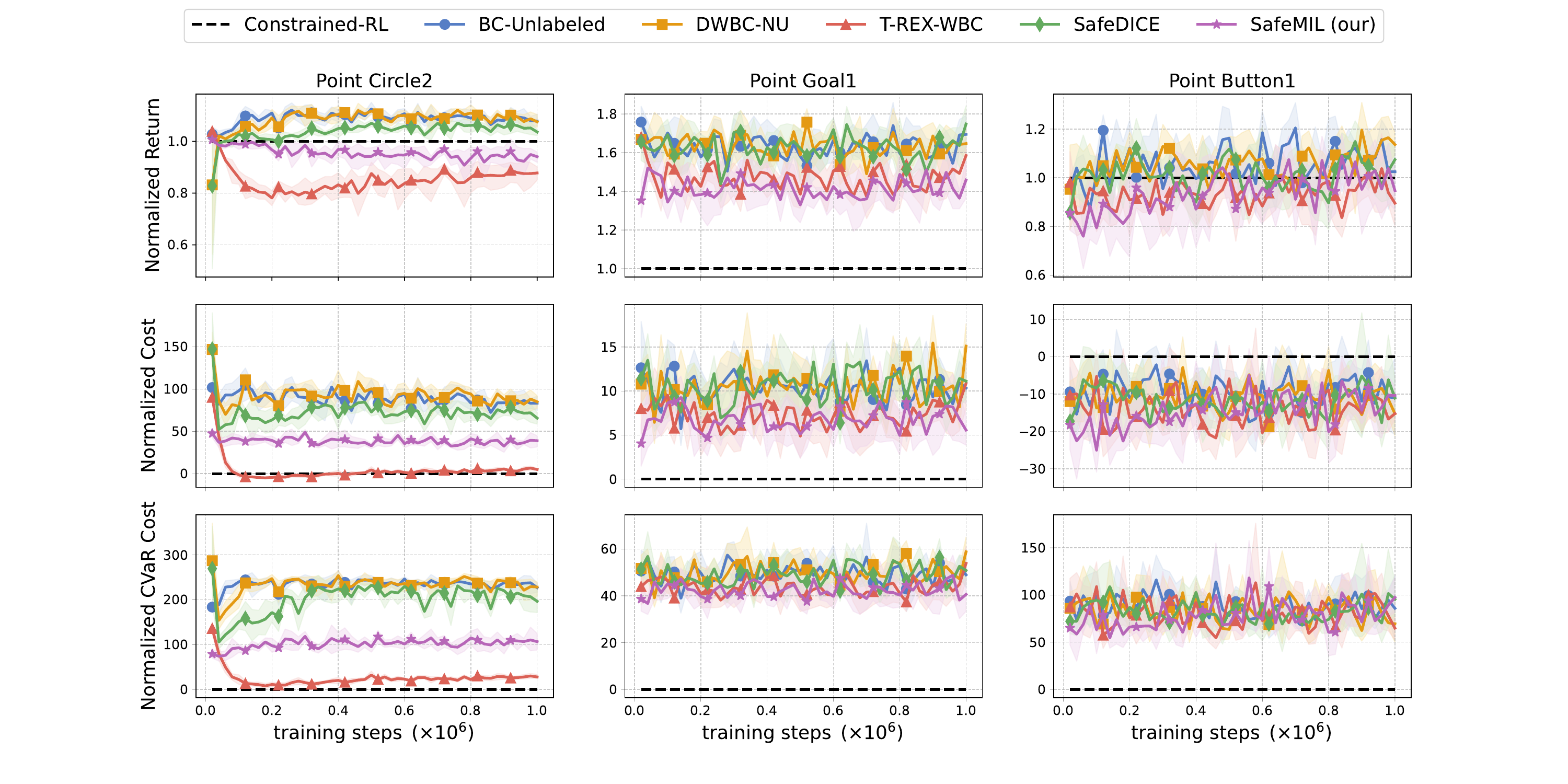}
    \caption{\textbf{Performance Comparison.} Experimental results on Point-Circle2, Point-Goal1, Point-Button1 tasks. Shaded  area represents the standard error. In navigation tasks, our method is competitive with other baselines.}
    \label{fig:norm_point_task_performance}
\end{figure*}

We train all the algorithms for 1 million training steps for both velocity-constrained and navigation tasks. We report the normalized final performance in Table \ref{table:performance_velocity} and \ref{table:performance_navigation}. Our proposed algorithm learns a safer policy and outperforms the best baseline in terms of Normalized Cost by a factor of $5.54 \times$ in Walker2d-Velocity, $5.97 \times$  in Swimmer-Velocity, $29.88 \times$ in Ant-Velocity, and $1.86\times$ in Point-Goal1. However, it underperforms in Point-Circle2 and Point-Button1 by $0.12 \times$ and $0.51 \times$ the best baseline algorithm, respectively.  Overall our algorithm achieves a median performance that is $3.7\times$ better than the best baseline algorithm across all environments.

\begin{table}[!htb]
\small
\centering
\setlength{\tabcolsep}{.5mm}
\def\arraystretch{2.5}
\begin{tabular}{@{}|c|ccc|ccc|ccc|@{}}
\toprule
\multirow{2}{*}{Algorithms} & \multicolumn{3}{c|}{Walker2d}                                                                                   & \multicolumn{3}{c|}{Swimmer}                                                                                      & \multicolumn{3}{c|}{Ant}                                                                                        \\ \cmidrule(l){2-10} 
                            & \multicolumn{1}{c|}{Return}          & \multicolumn{1}{c|}{Cost}                     & CVaR Cost                & \multicolumn{1}{c|}{Return}          & \multicolumn{1}{c|}{Cost}                      & CVaR Cost                 & \multicolumn{1}{c|}{Return}          & \multicolumn{1}{c|}{Cost}                     & CVaR Cost                \\ \midrule
BC-Unlabeled                & \multicolumn{1}{c|}{$0.97 \pm 0.01$} & \multicolumn{1}{c|}{$14.72 \pm 4.35$}         & $61.06 \pm 22.71$        & \multicolumn{1}{c|}{$0.87 \pm 0.05$} & \multicolumn{1}{c|}{$89.79 \pm 27.12$}         & $221.96 \pm 84.70$        & \multicolumn{1}{c|}{$1.00 \pm 0.01$} & \multicolumn{1}{c|}{$38.51 \pm 6.29$}         & $128.14 \pm 18.11$       \\
DWBC-NU                     & \multicolumn{1}{c|}{$0.97 \pm 0.00$} & \multicolumn{1}{c|}{$18.91 \pm 9.94$}         & $66.77 \pm 32.52$        & \multicolumn{1}{c|}{$0.82 \pm 0.18$} & \multicolumn{1}{c|}{$118.39 \pm 31.64$}        & $268.70 \pm 82.54$        & \multicolumn{1}{c|}{$1.01 \pm 0.00$} & \multicolumn{1}{c|}{$44.61 \pm 5.99$}         & $126.94 \pm 13.81$       \\
T-REX-WBC                   & \multicolumn{1}{c|}{$0.96 \pm 0.02$} & \multicolumn{1}{c|}{$9.28 \pm 6.57$}          & $34.19 \pm 20.15$        & \multicolumn{1}{c|}{$0.80 \pm 0.01$} & \multicolumn{1}{c|}{$60.43 \pm 8.95$}          & $180.73 \pm 37.45$        & \multicolumn{1}{c|}{$0.99 \pm 0.01$} & \multicolumn{1}{c|}{$25.68 \pm 9.53$}         & $99.02 \pm 29.67$        \\
SafeDICE                    & \multicolumn{1}{c|}{$0.97 \pm 0.00$} & \multicolumn{1}{c|}{$6.91 \pm 4.85$}          & $28.86 \pm 17.17$        & \multicolumn{1}{c|}{$0.88 \pm 0.07$} & \multicolumn{1}{c|}{$142.58 \pm 38.67$}        & $320.88 \pm 80.24$        & \multicolumn{1}{c|}{$1.01 \pm 0.01$} & \multicolumn{1}{c|}{$40.96 \pm 8.35$}         & $119.60 \pm 16.17$       \\
SafeMIL (ours)              & \multicolumn{1}{c|}{$0.99 \pm 0.00$} & \multicolumn{1}{c|}{$\mathbf{1.19 \pm 1.43}$} & $\mathbf{6.95 \pm 7.02}$ & \multicolumn{1}{c|}{$0.87 \pm 0.03$} & \multicolumn{1}{c|}{$\mathbf{10.16 \pm 3.05}$} & $\mathbf{56.48 \pm 9.36}$ & \multicolumn{1}{c|}{$0.95 \pm 0.00$} & \multicolumn{1}{c|}{$\mathbf{0.86 \pm 0.71}$} & $\mathbf{5.32 \pm 2.61}$ \\ \bottomrule
\end{tabular}
\caption{\textbf{Performance Comparison.} Experimental results of velocity-constrained tasks after 1 million training steps. We report Normalized Return (Return), Normalized Cost (Cost), and Normalized CVaR Cost (CVaR Cost). Mean and 95\% CIs over 5 seeds. In velocity-constrained tasks, our method outperforms all the baselines and can recover safer policies without compromising reward performance.}
\label{table:performance_velocity}
\end{table}

\begin{table}[!htb]
\small
\centering
\setlength{\tabcolsep}{.5mm}
\def\arraystretch{2.5}
\begin{tabular}{@{}|c|ccc|ccc|ccc|@{}}
\toprule
\multirow{2}{*}{Algorithms} & \multicolumn{3}{c|}{Circle2}                                                                                     & \multicolumn{3}{c|}{Goal1}                                                                                       & \multicolumn{3}{c|}{Button1}                                                                                        \\ \cmidrule(l){2-10} 
                            & \multicolumn{1}{c|}{Return}          & \multicolumn{1}{c|}{Cost}                     & CVaR Cost                 & \multicolumn{1}{c|}{Return}          & \multicolumn{1}{c|}{Cost}                     & CVaR Cost                 & \multicolumn{1}{c|}{Return}          & \multicolumn{1}{c|}{Cost}                       & CVaR Cost                  \\ \midrule
BC-Unlabeled                & \multicolumn{1}{c|}{$1.08 \pm 0.03$} & \multicolumn{1}{c|}{$83.38 \pm 8.84$}         & $226.96 \pm 10.33$        & \multicolumn{1}{c|}{$1.70 \pm 0.10$} & \multicolumn{1}{c|}{$10.33 \pm 1.76$}         & $49.02 \pm 5.52$          & \multicolumn{1}{c|}{$1.02 \pm 0.09$} & \multicolumn{1}{c|}{$-10.69 \pm 10.81$}         & $85.45 \pm 34.84$          \\
DWBC-NU                     & \multicolumn{1}{c|}{$1.08 \pm 0.02$} & \multicolumn{1}{c|}{$84.69 \pm 7.83$}         & $228.17 \pm 18.70$        & \multicolumn{1}{c|}{$1.64 \pm 0.08$} & \multicolumn{1}{c|}{$15.08 \pm 2.55$}         & $58.75 \pm 7.06$          & \multicolumn{1}{c|}{$1.14 \pm 0.05$} & \multicolumn{1}{c|}{$\mathbf{-15.41 \pm 6.41}$} & $69.71 \pm 18.99$          \\
T-REX-WBC                   & \multicolumn{1}{c|}{$0.88 \pm 0.03$} & \multicolumn{1}{c|}{$\mathbf{4.83 \pm 1.24}$} & $\mathbf{28.05 \pm 5.56}$ & \multicolumn{1}{c|}{$1.59 \pm 0.09$} & \multicolumn{1}{c|}{$10.97 \pm 3.64$}         & $53.96 \pm 7.42$          & \multicolumn{1}{c|}{$0.90 \pm 0.09$} & \multicolumn{1}{c|}{$-15.14 \pm 1.52$}          & $\mathbf{65.10 \pm 12.31}$ \\
SafeDICE                    & \multicolumn{1}{c|}{$1.04 \pm 0.01$} & \multicolumn{1}{c|}{$65.36 \pm 7.72$}         & $195.49 \pm 29.28$        & \multicolumn{1}{c|}{$1.75 \pm 0.11$} & \multicolumn{1}{c|}{$11.08 \pm 1.96$}         & $51.00 \pm 6.84$          & \multicolumn{1}{c|}{$1.08 \pm 0.07$} & \multicolumn{1}{c|}{$-11.07 \pm 5.01$}          & $95.13 \pm 19.57$          \\
SafeMIL (ours)              & \multicolumn{1}{c|}{$0.94 \pm 0.02$} & \multicolumn{1}{c|}{$39.18 \pm 6.42$}         & $106.54 \pm 18.13$        & \multicolumn{1}{c|}{$1.46 \pm 0.1$}  & \multicolumn{1}{c|}{$\mathbf{5.61 \pm 1.75}$} & $\mathbf{40.72 \pm 7.32}$ & \multicolumn{1}{c|}{$0.95 \pm 0.09$} & \multicolumn{1}{c|}{$-10.26 \pm 2.50$}          & $97.16 \pm 20.58$ \\ \bottomrule
\end{tabular}
\caption{\textbf{Performance Comparison.} Experimental results of navigation tasks after 1 million training steps. We report Normalized Return (Return), Normalized Cost (Cost), and Normalized CVaR Cost (CVaR Cost). Mean and 95\% CIs over 5 seeds. In navigation tasks, our method outperforms all baselines in the Point-Goal1 environment and is competitive with other baselines in Point-Circle2 and Point-Button1 environments.}
\label{table:performance_navigation}
\end{table}

\subsection{Sensitivity Analysis}
We report the normalized sensitivity to bag size and trajectory length for Point-Goal1 task.
\begin{figure*}[!htb]
    \centering
    \includegraphics[width=0.85\textwidth]{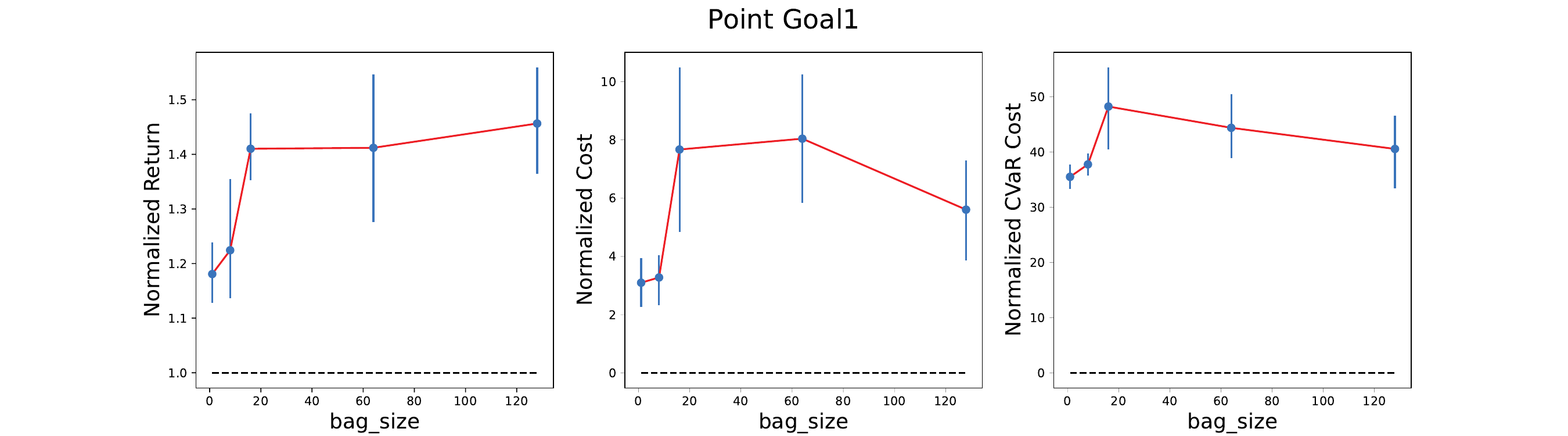}
    \caption{\textbf{Sensitivity to Bag Size.} We report the final mean performance of the algorithm on the Point-Goal1 environment for different bag sizes $K = {1, 8, 16, 64, 128}$, after training for 1 million steps. Mean and 95\% CIs over 5 seeds. We observe that increasing the bag size (K) lead to marginally better safety performance while maintaining reasonable episode return. However, for smaller bag size of 1 and 8, we observe that the safety performance is marginally better. }
    \label{fig:goal_sensitivity_bag_size}
\end{figure*}

\begin{figure*}[!htb]
    \centering
    \includegraphics[width=0.85\textwidth]{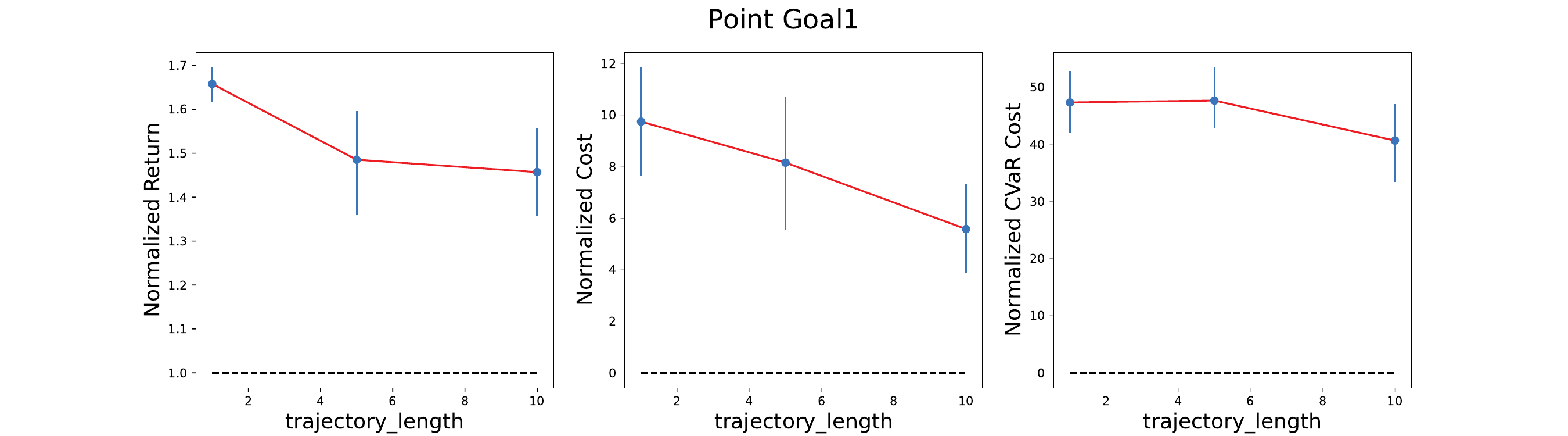}
    \caption{\textbf{Sensitivity to Trajectory Length.} We report the final mean performance of the algorithm on the Point-Goal1 environment for different partial trajectory lengths $H = {1, 5, 10}$, after training for 1 million steps. Mean and 95\% CIs over 5 seeds. Similar to Fig: \ref{fig:swimmer_sensitivity_trajectory_length}, we observe that, for sufficient bag size of 128, safety performance is stable across different trajectory lengths, with slightly better safety performance for longer trajectories.}
    \label{fig:goal_sensitivity_trajectory_length}
\end{figure*}

\newpage
\subsection{Effect of different weighing scheme}
Both transition and trajectory weighing scheme exhibit comparable performance. We report the results on Swimmer-Velocity and Point-Goal1 tasks.

\begin{figure*}[!htb]
    \centering
    \includegraphics[width=0.85\textwidth]{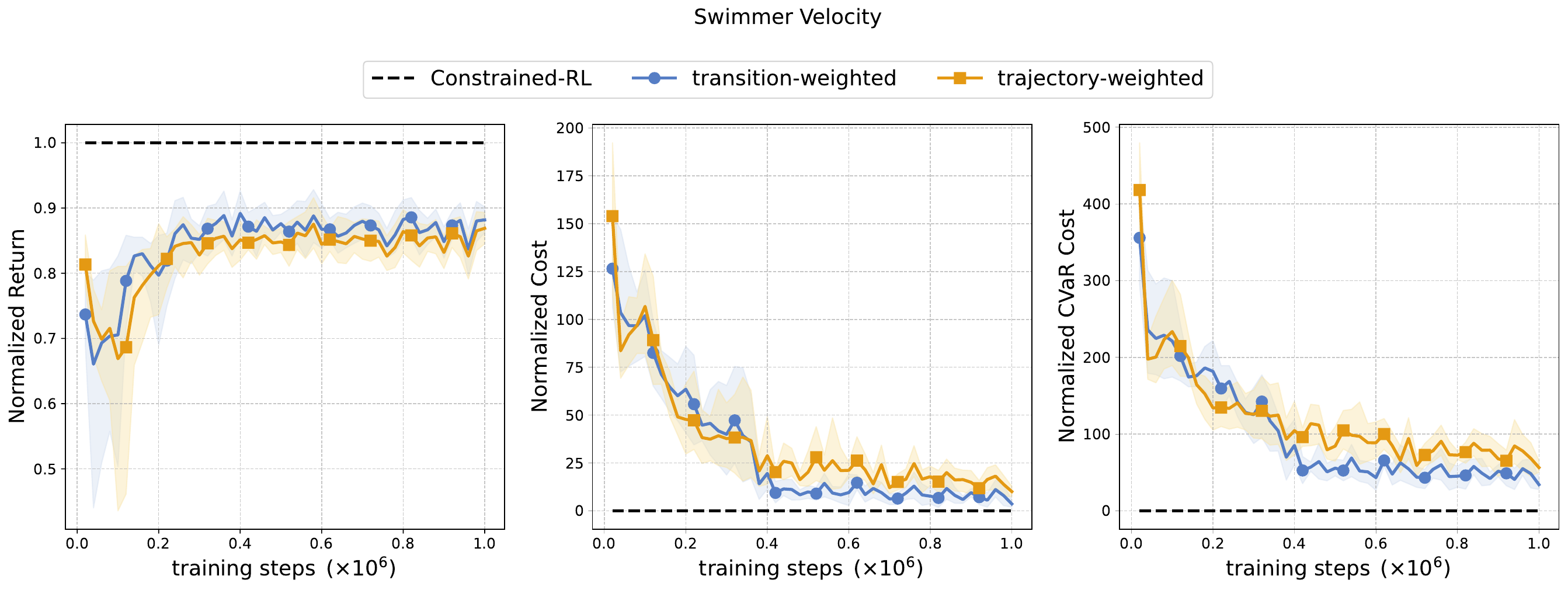}
    \caption{\textbf{Effect of different weighing scheme.} Performance of SafeMIL algorithm on Swimmer-Velocity task with different weighing scheme. The shaded area represents the standard error. We observe that the transition weighing scheme performs marginally better in the Swimmer-Velocity task.}
    \label{fig:ablation_study}
\end{figure*}

\begin{figure*}[!htb]
    \centering
    \includegraphics[width=0.85\textwidth]{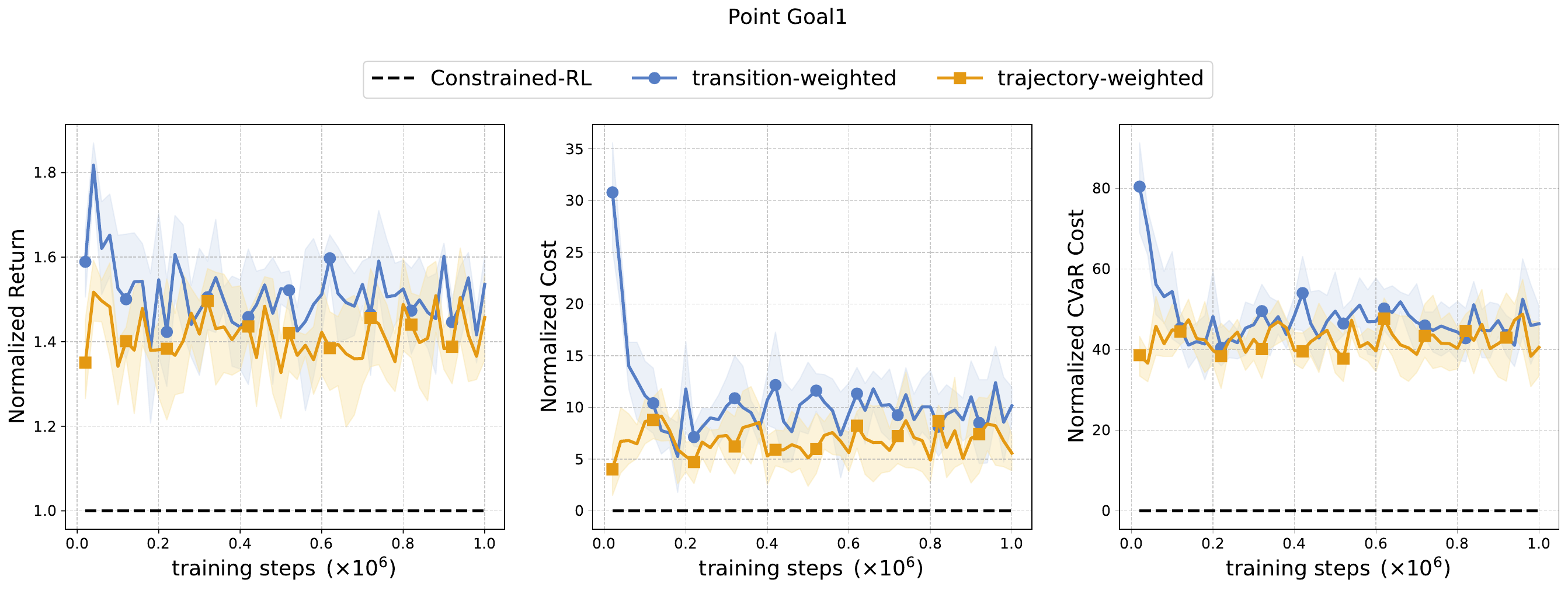}
    \caption{\textbf{Effect of different weighing scheme.} Performance of SafeMIL algorithm on Point-Goal1 task with different weighing scheme. The shaded area represents the standard error. We observe that the trajectory weighing scheme performs marginally better in the Point-Goal1 task.}
    \label{fig:ablation_study_goal1}
\end{figure*}

\subsection{Results of Normalized CVaR $k\%$ Cost}
We report the normalized CVaR $k\%$ Cost for both velocity-constrained and navigation tasks.
\begin{figure*}[!htb]
    \centering
    \includegraphics[width=0.85\textwidth]{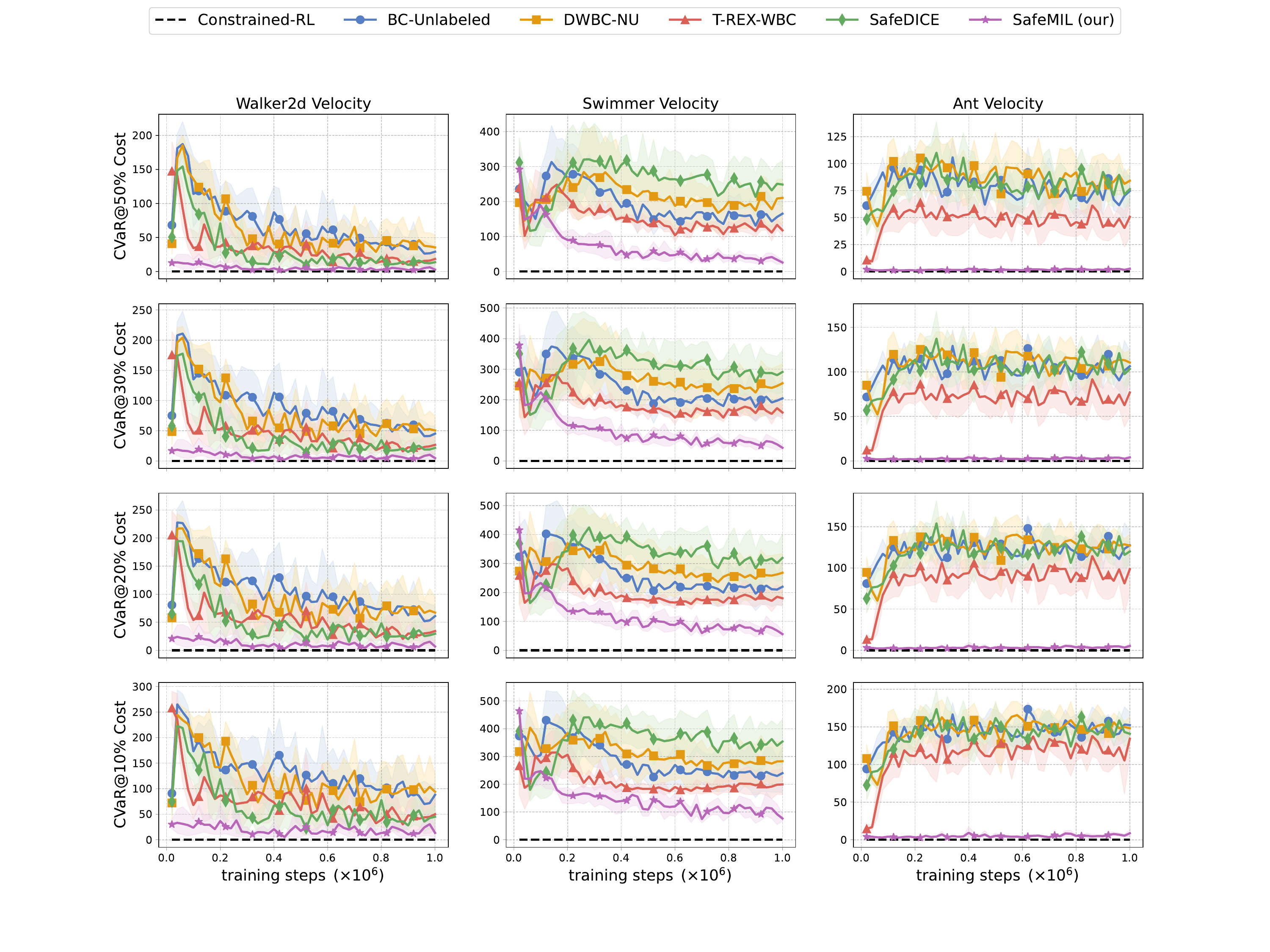}
    \caption{\textbf{Normalized CVaR $k\%$ Cost.} Experimental results on Walker2d-Velocity, Swimmer-Velocity, Ant-Velocity task. Shaded  area represents the standard error. We report the Normalized CVaR @ 50\%, 30\%, 20\%, and 10\% cost. We observe that our findings are consistent and SafeMIL is able to consistently perform better than other baselines.}
    \label{fig:worst_performance_velocity_task}
\end{figure*}

\begin{figure*}[!htb]
    \centering
    \includegraphics[width=0.85\textwidth]{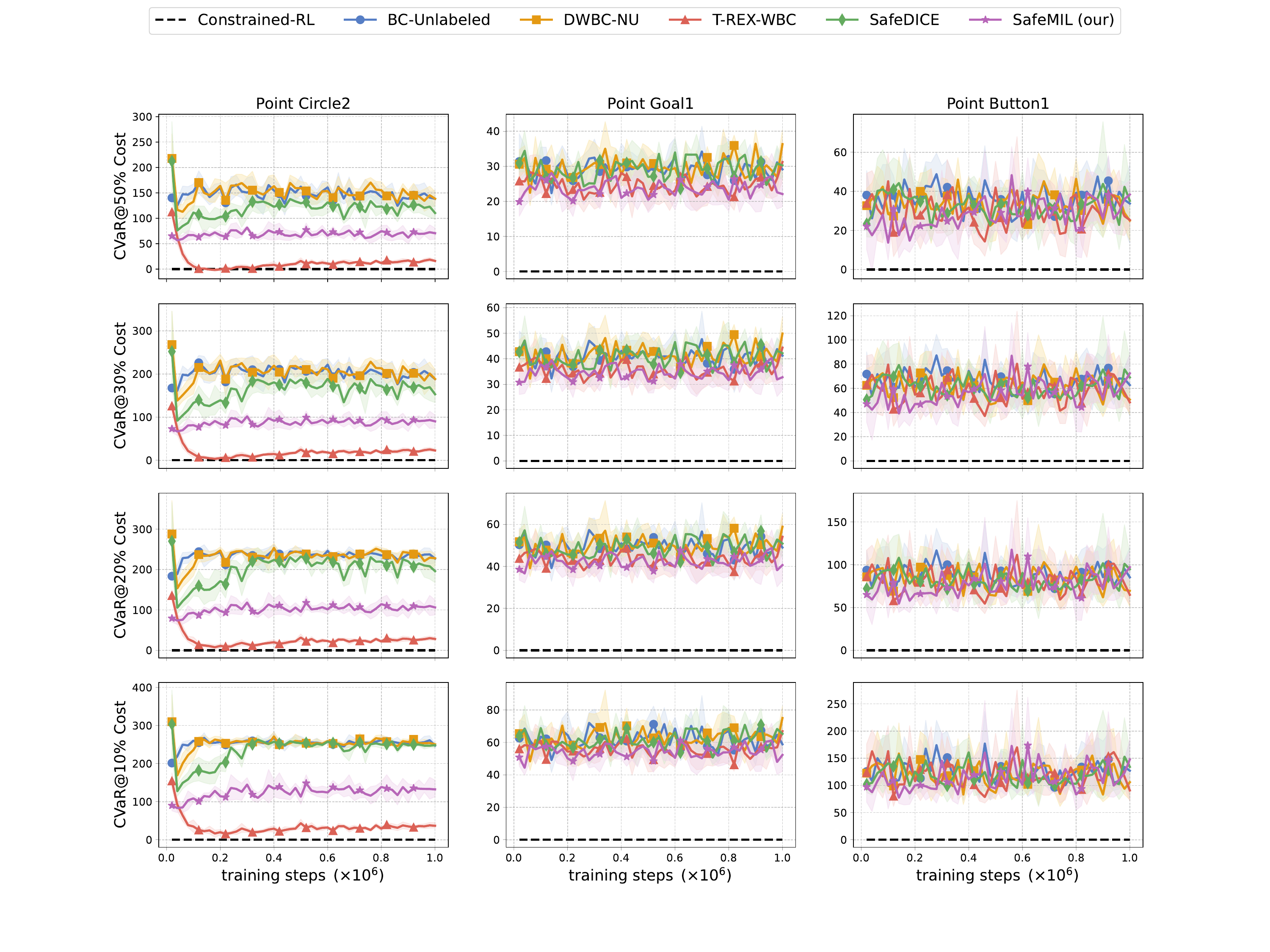}
    \caption{\textbf{Normalized CVaR $k\%$ Cost.} Experimental results on Point-Circle2, Point-Goal1, Point-Button1 task. Shaded  area represents the standard error. We report the Normalized CVaR @ 50\%, 30\%, 20\%, and 10\% cost. We observe that our findings are consistent and SafeMIL is competitive with other baselines.}
    \label{fig:worst_performance_point_task}
\end{figure*}

\section{G.\; Compute Details}
\label{appendix:compute}

We used two server nodes equipped with the following specification:
\begin{itemize}
    \item CPU: AMD EPYC 7543 32-Core Processor
    \item Memory: 30GB
    \item GPU: NVIDIA A30
\end{itemize}

\end{document}